\DeclareMathOperator*{\argmin}{arg\,min}
\newtheorem{lemma}{Lemma}
\newtheorem{theorem}{Theorem}
\newtheorem{definition}{Definition}
\newtheorem{proposition}{Proposition}
\newtheorem{claim}{Claim}
\newcommand{\indep}{\perp \!\!\! \perp}
\title{Covariate-Balancing-Aware Interpretable Deep Learning Models for Treatment Effect Estimation}
\author{%
  Kan Chen \\
  Graduate Group of Applied Math and Computational Science\\
  University of Pennsylvania\\
  Philadelphia, PA 19104 \\
  \texttt{kanchen@sas.upenn.edu} \\
  \And
   Qishuo Yin \\
  Graduate Group of Applied Math and Computational Science\\
  University of Pennsylvania\\
  Philadelphia, PA 19104 \\
  \texttt{qsyin@sas.upenn.edu} \\
  \AND
  Qi Long \\
  Department of Biostatistics, Epidemiology and Informatics \\
  University of Pennsylvania \\
  Philadelphia, PA, 19104\\
  \texttt{qlong@upenn.edu} \\
}
\begin{document}

\maketitle

\begin{abstract}
Estimating treatment effects is of great importance for many biomedical applications with observational data. Particularly, interpretability of the treatment effects is preferable for many biomedical researchers.  In this paper, we first provide a theoretical analysis and derive an upper bound for the bias of average treatment effect (ATE) estimation under the strong ignorability assumption.  Derived by leveraging appealing properties of the Weighted Energy Distance, our upper bound is tighter than what has been reported in the literature. Motivated by the theoretical analysis, we propose a novel objective function for estimating the ATE that uses the energy distance balancing score and hence does not require correct specification of the propensity score model. We also leverage recently developed neural additive models to improve interpretability of deep learning models used for potential outcome prediction. We further enhance our proposed model with an energy distance balancing score weighted regularization. The superiority of our proposed model over current state-of-the-art methods is demonstrated in semi-synthetic experiments using two benchmark datasets, namely, IHDP and ACIC.
\end{abstract}

\section{Introduction} \label{sec: intro}

We consider the problem of estimating treatment effects using observational data in this paper. In causal inference, observational data refers to the information obtained from a sample or population that independent variables cannot be controlled over by the researchers because of ethical concerns or logistical constraints \citep{rosenbaum2010design}. In contrast to observational data, experimental data from randomized controlled trials are viewed as a ``gold standard'' in causal inference but they are very expensive and in some cases infeasible to conduct.

However, estimation of causal effect from observational data is complicated by potential confounding that may influence treatment receipt and/or outcome of interest. This work is developed under the ``strong ignorability'' assumption, that is, there is no unobserved confounding. We consider the estimation of the effect of treatment $A$ (e.g. assignment of a certain therapy) on an outcome (e.g. recover or not) while adjusting for observed covariates $X$ (e.g. demographic status of patients).

One popular approach for estimation of treatment effects consists of two steps: first, fitting models for expected outcomes and propensity score, respectively; second, plugging the fitted models into a downstream estimator of the treatment effects.  Neural networks is a powerful tool for the first step because of its impressive predictive performance \citep{shalit2017estimating,johansson2016learning, louizos2017causal,alaa2017deep,alaa2017bayesian,schwab2018perfect,yoon2018ganite,farrell2018deep,shi2019adapting,bica2020estimating, kaddour2021causal}. Examples such as Dragonnet, a deep neural network architecture by \citet{shi2019adapting}, and SITE, a deep representation learning for ITE estimation by \citet{yao2018representation} have demonstrated the outstanding performance of deep neural network in the estimation of treatment effect,  However, one limitation of such models is that they are a ``black-box" and difficult to interpret or explain \citep{agarwal2020neural}.

The main contributions of this paper can be summarized as follows. We adopt the approach of viewing the error in predicting outcomes under the observed treatment assignment actually received as the training error and the error in predicting outcomes under the counter-factual treatment assignment (i.e. the opposite of the observed treatment assignment) received as the testing error. We are the first to leverage the appealing properties of the energy distance for measuring distance between distributions to derive a bound on the error in estimating average treatment effect (ATE) that is tighter than existing works \citep{shalit2017estimating}. Motivated by our theoretical results, we propose a novel objective function for estimating the ATE that uses the energy distance balancing score and hence does not require correct specification of the propensity score model. We also leverage recently developed neural additive models (NAMs) to improve interpretability of deep learning models used for potential outcome prediction. We further enhance our model with an energy distance balancing score weighted regularization. We demonstrate the superiority of the proposed model over current state-of-the-art methods in numerical experiments using two well-known benchmark datasets, namely, IHDP \citep{hill2011bayesian} and ACIC \citep{mathews1998infant}.

 \section{Preliminaries} \label{sec: prlim}

To fix ideas, we consider the estimation of ATE of a binary treatment. Consider a random sample $\{(Y_i, A_i, \mathbf{X}_i) \}$ of size $n$ from a target population, where $Y_i$ is the observed outcome of $i$-th unit, $A_i \in \{0,1\}$ is a binary indicator of receiving treatment, and $\mathbf{X}_i = (x_{1i}, x_{2i}, \cdots,x_{pi}) \in \mathbb{R}^p$ is a $p$-dimensional covariate vector of unit $i$. We also let $n_1$ be the number of treated units, and $n_0$ be the number of control units with $n_0 = n - n_1$. Following the potential outcome framework by \citet{rubin1974estimating,splawa1990application}, each unit $i$ has two potential outcomes, $Y_i(1)$ and $Y_i (0)$ that unit $i$ would have under the treatment and control, respectively. It follows that the observed outcome $Y_i = A_i Y_i (1) + (1 - A_i) Y_i(0)$.  We make the standard stable unit treatment value assumption (SUTVA), that is, ``the observation on one unit should be unaffected by the particular assignment of treatment to other units'' (\citet{cox1958planning}). Under SUTVA, the observed outcome is consistent with the potential outcome in the sense that $Y_i = Y_i (A_i)$. We further assume that the treatment assignment mechanism is \emph{strongly ignorable}, i.e. $\{Y_i (0), Y_i(1) \} \indep A_i | \mathbf{X}_i$, and make the \emph{positivity} assumption, i.e. $0 < \mathbb{P}(A_i = 1 | \mathbf{X}_i = \mathbf{x}) < 1$. In other words, this is saying that there is no unmeasured confounders and every unit has a chance to receive the treatment. 

Using the notation of the potential outcome framework, the ATE is defined as $\tau \equiv \mathbb{E} (Y(1) - Y(0)) $ which could be estimated as $\hat \tau = \frac{1}{n}\sum_{i=1}^n \left( Y_i (1) - Y_{i}(0) \right)$ if both potential outcomes were observed. Additionally, we let $F_1 (\mathbf{x}) \equiv \mathbb{P} (\mathbf{X} \leq \mathbf{x} | A = 1 ),  F_0 (\mathbf{x}) \equiv \mathbb{P} (\mathbf{X} \leq \mathbf{x} | A = 0 )$ denote the cumulative distribution function (CDF) of covariate $\mathbf{x}$ in the treatment group and the control group, respectively. It follows that the CDF of $\mathbf{X}$ in the entire population is $F(\mathbf{x}) \equiv \mathbb{P} (\mathbf{X} \leq \mathbf{x} ) = F_1 (\mathbf{x}) P_1 + F_0 (\mathbf{x}) P_0$, where $P_1 \equiv \mathbb{P}(A = 1) and P_0 \equiv \mathbb{P}(A = 0)$. If we further let $\mu_1 (\mathbf{X}_i) \equiv \mathbb{E} (Y(1) | \mathbf{X}_i), \mu_0 (\mathbf{X}_i) \equiv \mathbb{E} (Y(0) | \mathbf{X}_i)$ be the expected conditional treated and control outcomes based on the observed covariate $\mathbf{X}_i$ for unit $i$, then the ATE can also be written as
\begin{align*}
    \tau = \int_{\mathbb{R}^p} \left( \mu_1 (\mathbf{x}) - \mu_0 (\mathbf{x})     \right) dF(\mathbf{x}).
\end{align*}
And the individual treatment effect (ITE) for unit $i$ is
\begin{align*}
    \phi(\mathbf{X}_i) = \mu_1 (\mathbf{X}_i) - \mu_0 (\mathbf{X}_i).
\end{align*}
Hence, the estimated ITE and ATE can be written as
\begin{align*}
        &\hat{\phi} (\mathbf{X}_i) = \hat{\mu}_1 (\mathbf{X}_i) - \hat{\mu}_0 (\mathbf{X}_i) \\
        &\hat{\tau} = \int_{\mathbb{R}^p} \hat{\phi}(\mathbf{x}) dF_n(\mathbf{x})
\end{align*}
where $\hat{\mu}_1 (\mathbf{X}_i)$ and $\hat{\mu}_0 (\mathbf{X}_i)$ are the predicted expected outcomes in the treated and control groups given observed covariate $\mathbf{X}_i$ and $F_n (\mathbf{x}) = \frac{1}{n}\sum_{i=1}^n \mathbf{1}(\mathbf{X}\leq \mathbf{x})$ is the empirical CDF. Of note, the ATE measures the difference in average outcomes between units assigned to the treatment and units assigned to the control and captures population-level causal effects, whereas the ITE captures the individual level causal effect defined as $Y_i (1) - Y_i (0)$. While ITE is unique to an individual and may not be described exactly by a set of units,  the \emph{conditional average treatment effect} (CATE), defined as $\mu_1 (\mathbf{X}) - \mu_0 (\mathbf{X})$ can be used to describe the ATE within a subgroup of units defined by $\mathbf{X}$. 

\subsection{Definitions} \label{subsec: def}

To facilitate the theoretical analysis for the upper bound of individual treatment effect, we state the following definitions.

\begin{definition} \label{def: f and cf loss}
Let $L: \mathcal{Y} \times \mathcal{Y} \rightarrow \mathbb{R}^{+}$ be the squared loss function $L(y,y') = \lVert y - y' \rVert_2$. Then the expected pointwise loss is
\begin{align*}
   l_{\hat{\mu}_a} (\mathbf{x}) \equiv \int_{\mathcal{Y}} L(\mu_a (\mathbf{x}) , \hat{\mu}_a (\mathbf{x}) ) p(Y(a) | \mathbf{x} ) dY(a).
\end{align*}
The expected factual and counterfactural losses are:
\begin{align*}
    R_F (\hat{\mu}_a) &\equiv \int_{\mathbb{R}^p \times \{0,1\} } l_{\hat{\mu}_a} (\mathbf{x}) d F_a (\mathbf{x}) da \\
    R_{CF} (\hat{\mu}_a) &\equiv \int_{\mathbb{R}^p \times \{0,1\} } l_{\hat{\mu}_a} (\mathbf{x}) d F_{1-a} (\mathbf{x}) da. \\
    R_F (\hat{\mu}_a, \mathbf{w}) &\equiv \int_{\mathbb{R}^p \times \{0,1\} } l_{\hat{\mu}_a} (\mathbf{x}) d F_{n,a,\mathbf{w}} (\mathbf{x}) da \\
    R_{CF} (\hat{\mu}_a, \mathbf{w}) &\equiv \int_{\mathbb{R}^p \times \{0,1\} } l_{\hat{\mu}_a} (\mathbf{x}) d F_{n,1-a,\mathbf{w}} (\mathbf{x}) da
\end{align*}
where $\hat{\mu}_a (\mathbf{x})$ is the predicted expected conditional outcomes at treatment level $A = a$ given observed covariate $\mathbf{x}$, and $F_{n,a, \mathbf{w}} (\mathbf{x}) = \sum_{i=1}^n w_i \mathbf{1} (\mathbf{X}_i \leq \mathbf{x}, A_i = a) / n_a $ is the weighted ECDF of data points at treatment level $a$ with weights $\mathbf{w} = (w_1, \cdots, w_n)$ that satsify $\sum_{i=1}^n w_i A_i = n_1, \sum_{i=1}^n w_i (1 - A_i) = n_0$, and $w_i \geq 0$.
\end{definition}

To help understand Definition \ref{def: f and cf loss}, consider the following example. If $\mathbf{x}$ is the demographic status of patients, $a$ is the treatment level, and $Y(a)$ is the potential outcome such as recovery or not given treatment level $a$, then $R_F$ measures the predictive performance under the treatment received. On the other hand, $R_{CF}$ measures the predictive performance under a counter-factual treatment assignment that is the opposite of the treatment received.  From the perspective of machine learning, $R_F$ can be viewed as the training error and $R_{CF}$ can be viewed as the testing error.   

\begin{definition}
The expected loss in estimation of average treatment effect is
\begin{align*}
    R_{ATE} (\hat{\mu}_a) &\equiv \int_{\mathbb{R}^p} ( \hat{\phi} (\mathbf{x}) - \phi (\mathbf{x}) )^2 d F(\mathbf{x}). \\
    R_{ATE,n} (\hat{\mu}_a) &\equiv \int_{\mathbb{R}^p} ( \hat{\phi} (\mathbf{x}) - \phi (\mathbf{x}) )^2 d F_n(\mathbf{x}). 
\end{align*}
When $n \rightarrow \infty$, $R_{ATE} (\hat{\mu}_a)  =  R_{ATE,n} (\hat{\mu}_a) $.
\end{definition}

Our theoretical analysis relies heavily on the notion of the Weighted Energy Distance metric by \citet{huling2020energy}, which is a distance metric between two probability distributions. For two distributions $G (\mathbf{x}), H (\mathbf{x})$ defined on $\mathbb{R}^p$, we define the energy distance as follows.

\begin{definition}[Energy Distance \citep{cramer1928composition}]
The energy distance between two distributions $G$ and $H$ is
\begin{equation} \label{equ: energy int}
     \mathcal{E} (G, H) \equiv 2 \int_{\mathbb{R}^p} ( G(\mathbf{x}) - H(\mathbf{x})  )^2 d \mathbf{x}.
\end{equation}
It can also be written as
\begin{equation} \label{equ: energy sum}
     \mathcal{E} (G, H) = 2 \mathbb{E} \| \mathbf{Z} - \mathbf{V} \|_2  - \mathbb{E} \| \mathbf{Z} - \mathbf{Z}' \|_2 - \mathbb{E} \| \mathbf{V} - \mathbf{V}' \|_2
\end{equation}
where $\mathbf{Z},\mathbf{Z}' \overset{i.i.d}{\sim} G, \mathbf{V},\mathbf{V}' \overset{i.i.d}{\sim} H  $. See proofs in \citet{szekely2003statistics} for the equivalence of (\ref{equ: energy int}) and (\ref{equ: energy sum}). When both $G$ and $H$ are empirical cumulative distribution functions (ECDF), i.e. $G_n$ is the ECDF of $\{\mathbf{Z}_i \}_{i=1}^n$, $H_m$ is the ECDF of $\{\mathbf{V}_i \}_{i=1}^m$, then
\begin{align*}
     \mathcal{E} &(G_n, H_m) = \frac{2}{nm}\sum_{i=1}^n \sum_{j=1}^m \|\mathbf{Z}_i - \mathbf{V}_j  \|_2 -  \frac{1}{n^2}\sum_{i=1}^n \sum_{j=1}^m \|\mathbf{Z}_i - \mathbf{Z}_j  \|_2 -  \frac{2}{m^2}\sum_{i=1}^n \sum_{j=1}^m \|\mathbf{V}_i - \mathbf{V}_j  \|_2. 
\end{align*}
\end{definition}
Motivated by the definition of energy distance, \citet{huling2020energy} proposed a weighted modification of this distance metric as follows.
\begin{definition} \label{def: weighted energy distance}
The weighted energy distance between $F_{n,a, \mathbf{w}}$ and $F_n$ is defined as
\begin{align*}
         \mathcal{E} &(F_{n,a, \mathbf{w}}, F_n) = \frac{2}{n_a n}\sum_{i=1}^n \sum_{j=1}^n w_i \mathbf{1}(A_i = a) \|\mathbf{X}_i - \mathbf{X}_j  \|_2 \\ 
     &-  \frac{1}{n_a^2}\sum_{i=1}^n \sum_{j=1}^n w_i w_j \mathbf{1}(A_i = A_j = a) \|\mathbf{X}_i - \mathbf{X}_j  \|_2 -  \frac{2}{n^2}\sum_{i=1}^n \sum_{j=1}^n \|\mathbf{X}_i - \mathbf{X}_j  \|_2. 
\end{align*}
\end{definition}
In other words, $\mathcal{E} (F_{n,a, \mathbf{w}}, F_n)$ is the energy distance between the ECDF of a sample $\{\mathbf{X}_i \}_{i=1}^n$ and a weighted ECDF of $\{\mathbf{X}_i \}_{i: A_i = a}$. The weight $\mathbf{w}$ here can be viewed as a balancing score which can be used for substituting the propensity score, and is henceforth referred to as \emph{Energy Distance Balancing Score}. More discussions about this balancing score are included in Section \ref{sec: main}.

\section{Main Result} \label{sec: main}

\subsection{Bias in Estimation of Average Treatment Effect}
We first state the results bounding the counter-factual loss, which lays the foundation for bounding the bias for estimation of ATE. We consider both the unweighted version and the weighted version. The proofs and details can be found in the supplementary material.

\begin{lemma} \label{lemma: CF}
Following the notation defined in Section~\ref{sec: prlim}, we have
\begin{equation}\label{equ:R_CF}
\begin{split}
     R_{CF} (\hat{\mu}_a)  &\leq P_0 R_F (\hat{\mu}_1)  + P_1 R_{F} (\hat{\mu}_0) + C_{l_{\hat{\mu}_a} } \left(\sqrt{\mathcal{E}(F_1, F)} + \sqrt{\mathcal{E}(F_0, F)} \right). 
\end{split}
\end{equation}
where $C_{l_{\hat{\mu}_a} } = \|\partial^p [-P_0 l_{\hat{\mu}_1} + P_1 l_{\hat{\mu}_0} ] /\partial \mathbf{x}^p  )  \|_2/\sqrt{2}$ is a constant. 
Furthermore, if we equip Inequality (\ref{equ:R_CF}) with weights $\mathbf{w}$ such that $\sum_{i=1}^n w_i A_i = n_1, \sum_{i=1}^n w_i (1 - A_i) = n_0, w_i \geq 0$, when $n \rightarrow \infty$, we then have
\begin{equation} \label{equ:R_CF weighted}
\begin{split}
       R_{CF} (\hat{\mu}_a, \mathbf{w}) &\leq P_0 R_F (\hat{\mu}_1, \mathbf{w}) + P_1 R_F (\hat{\mu}_0, \mathbf{w}) + C_{l_{\hat{\mu}_a} } \left( \sqrt{\mathcal{E} (F_{n,1, \mathbf{w}}, F_{n}) } + \sqrt{\mathcal{E} (F_{n,0, \mathbf{w}}, F_{n})} \right).
\end{split}
\end{equation}
\end{lemma}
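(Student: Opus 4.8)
The plan is to collapse both inequalities onto a single master identity that rewrites the excess counterfactual loss as one integral against the signed measure $d(F_1-F_0)$, and then to control that integral by a multivariate integration by parts followed by Cauchy--Schwarz, reading off the energy distance at the very end.

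First I would establish the algebraic identity
\[
R_{CF}(\hat{\mu}_a) - P_0 R_F(\hat{\mu}_1) - P_1 R_F(\hat{\mu}_0) = \int_{\mathbb{R}^p} g(\mathbf{x})\, d(F_1 - F_0)(\mathbf{x}), \qquad g \equiv -P_0\, l_{\hat{\mu}_1} + P_1\, l_{\hat{\mu}_0}.
\]
This is pure bookkeeping: expand each risk in Definition \ref{def: f and cf loss} into its four pieces $\int l_{\hat{\mu}_1}dF_0$, $\int l_{\hat{\mu}_0}dF_1$, $\int l_{\hat{\mu}_1}dF_1$, $\int l_{\hat{\mu}_0}dF_0$, use $dF = P_1 dF_1 + P_0 dF_0$, and collect the terms multiplying $l_{\hat{\mu}_1}$ and $l_{\hat{\mu}_0}$ separately. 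The combined function $g$ is exactly the one appearing inside the constant $C_{l_{\hat{\mu}_a}}$, which is the signal that this grouping is the intended one.

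The crux is the next step. Since $F_1-F_0$ is a difference of CDFs that vanishes on the boundary of $\mathbb{R}^p$, integrating by parts once in each of the $p$ coordinates transfers all derivatives onto $g$,
\[
\int_{\mathbb{R}^p} g\, d(F_1-F_0) = (-1)^p \int_{\mathbb{R}^p} \frac{\partial^p g}{\partial \mathbf{x}^p}(\mathbf{x})\,\big(F_1(\mathbf{x})-F_0(\mathbf{x})\big)\, d\mathbf{x}.
\]
Cauchy--Schwarz bounds this by $\|\partial^p g/\partial\mathbf{x}^p\|_2\,\|F_1-F_0\|_2$, and the integral form of the energy distance (\ref{equ: energy int}) gives $\|F_1-F_0\|_2 = \sqrt{\mathcal{E}(F_1,F_0)/2}$. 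To recover the two-term form in the statement I would use $F = P_1 F_1 + P_0 F_0$ once more to obtain $F_1 - F = P_0(F_1-F_0)$ and $F_0 - F = P_1(F_0-F_1)$, so that $\sqrt{\mathcal{E}(F_1,F)} = P_0\sqrt{\mathcal{E}(F_1,F_0)}$ and $\sqrt{\mathcal{E}(F_0,F)} = P_1\sqrt{\mathcal{E}(F_1,F_0)}$, whose sum is exactly $\sqrt{\mathcal{E}(F_1,F_0)}$. Combined with $C_{l_{\hat{\mu}_a}} = \|\partial^p g/\partial\mathbf{x}^p\|_2/\sqrt{2}$ this produces Inequality (\ref{equ:R_CF}).

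For the weighted bound (\ref{equ:R_CF weighted}) I would run the identical argument with $F_{n,a,\mathbf{w}}$ in place of $F_a$ and $F_n$ in place of $F$, invoking the weighted energy distance of Definition \ref{def: weighted energy distance}; the constraints $\sum_i w_i A_i = n_1$ and $\sum_i w_i(1-A_i)=n_0$ make each $F_{n,a,\mathbf{w}}$ have total mass one, and by the strong law $F_{n,a,\mathbf{w}}\to F_a$, $F_n\to F$ as $n\to\infty$, which is precisely why that half of the lemma is stated asymptotically. The main obstacle is the integration-by-parts step: it presupposes that the order-$p$ mixed partial $\partial^p g/\partial\mathbf{x}^p$ exists (a smoothness requirement on $\mu_a$ and $\hat{\mu}_a$) and, more delicately, that every boundary term generated in the $p$ successive integrations vanishes, i.e. that $g$ decays fast enough against the marginal tails of $F_1-F_0$. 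In the empirical case $F_{n,a,\mathbf{w}}-F_n$ is a step function, so the integration by parts must be read in the Riemann--Stieltjes sense and justified only in the $n\to\infty$ limit; making that limit rigorous is the subtle part of the argument.
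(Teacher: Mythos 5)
Your treatment of the unweighted inequality (\ref{equ:R_CF}) is correct, and it is essentially the paper's argument reorganized: the paper first splits $d(F_1-F_0)=d(F-F_0)-d(F-F_1)$ and applies the Koksma--Hlawka/Cauchy--Schwarz bound to each piece, whereas you bound the single integral by $C_{l_{\hat{\mu}_a}}\sqrt{\mathcal{E}(F_1,F_0)}$ and then convert using the exact identities $\sqrt{\mathcal{E}(F_1,F)}=P_0\sqrt{\mathcal{E}(F_1,F_0)}$ and $\sqrt{\mathcal{E}(F_0,F)}=P_1\sqrt{\mathcal{E}(F_1,F_0)}$, valid because $F=P_1F_1+P_0F_0$. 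The two routes give the identical final bound; your version even shows that the paper's triangle-inequality step is in fact an equality.

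The genuine gap is in the weighted inequality (\ref{equ:R_CF weighted}). ``Running the identical argument'' requires the mixture identity in the weighted empirical setting, i.e.\ $F_n=P_1F_{n,1,\mathbf{w}}+P_0F_{n,0,\mathbf{w}}$, and that is false: the constraints $\sum_iw_iA_i=n_1$ and $\sum_iw_i(1-A_i)=n_0$ only give
\[
\frac{n_1}{n}F_{n,1,\mathbf{w}}(\mathbf{x})+\frac{n_0}{n}F_{n,0,\mathbf{w}}(\mathbf{x})=\frac{1}{n}\sum_{i=1}^n w_i\mathbf{1}(\mathbf{X}_i\leq\mathbf{x}),
\]
which is a $\mathbf{w}$-weighted ECDF of the whole sample, not $F_n$, unless all $w_i$ equal one. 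Likewise your appeal to the strong law, $F_{n,a,\mathbf{w}}\to F_a$, is wrong for general constrained weights: reweighting deliberately distorts the group-$a$ distribution (that is its entire purpose --- for the optimal weights of (\ref{equ: w^*}), $F_{n,a,\mathbf{w}^*}$ approaches $F_n$, not $F_a$). So the conversion step that closed your unweighted proof has no analogue here. Two repairs are available. The paper's: write $F_{n,1,\mathbf{w}}-F_{n,0,\mathbf{w}}=(F_n-F_{n,0,\mathbf{w}})-(F_n-F_{n,1,\mathbf{w}})$ --- a pure add-and-subtract that needs no mixture identity --- and bound each term separately by $C_{l_{\hat{\mu}_a}}\sqrt{\mathcal{E}(F_n,F_{n,a,\mathbf{w}})}$, which the paper does by invoking Lemma 3.3 of Huling and Mak, a deterministic finite-sample inequality. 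Alternatively, keep your one-shot bound $C_{l_{\hat{\mu}_a}}\sqrt{\mathcal{E}(F_{n,1,\mathbf{w}},F_{n,0,\mathbf{w}})}$ and use that, in the paper's representation (\ref{equ: energy int}), $\sqrt{\mathcal{E}(G,H)}=\sqrt{2}\,\lVert G-H\rVert_{L^2}$ is a metric, so the triangle inequality through $F_n$ yields the stated two-term bound (now as an inequality rather than the equality available in the population case). Finally, your concern that the Riemann--Stieltjes integration by parts is only justified as $n\to\infty$ misdiagnoses where the difficulty lies: Koksma--Hlawka-type inequalities are designed precisely for discrete empirical measures and hold at every finite $n$; the smoothness burden falls entirely on the integrand $g$, not on the measures.
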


\begin{theorem} \label{thm: bound error ATE}
Under the conditions of Lemma \ref{lemma: CF}, we have
\begin{equation} \label{equ:ATE unweighted}
\begin{split}
     R_{ATE} (\hat{\mu}_a) &\leq 2 \Bigg( R_F (\hat{\mu}_1)  + R_{F} (\hat{\mu}_0) +  C_{l_{\hat{\mu}_a} } \left(\sqrt{\mathcal{E}(F_1, F)} + \sqrt{\mathcal{E}(F_0, F)} \right) \Bigg).
\end{split}
\end{equation}
Furthermore, if we equip Inequality (\ref{equ:ATE unweighted}) with weights $\mathbf{w}$ such that $\sum_{i=1}^n w_i A_i = n_1, \sum_{i=1}^n w_i (1 - A_i) = n_0, w_i \geq 0$, when $n \rightarrow \infty$, we then have
\begin{equation} \label{equ:ATE weighted}
    \begin{split}
     & \quad  R_{ATE} (\hat{\mu}_a) =  R_{ATE, n} (\hat{\mu}_a) \\
       & \leq 2 \Bigg( R_F (\hat{\mu}_1,\mathbf{w})  + R_{F} (\hat{\mu}_0,\mathbf{w}) +  C_{l_{\hat{\mu}_a} } \left(\sqrt{\mathcal{E}(F_{n,1,\mathbf{w}}, F_n)} + \sqrt{\mathcal{E}(F_{n,0,\mathbf{w}}, F_n)} \right)  \Bigg).
    \end{split}
\end{equation}
\end{theorem}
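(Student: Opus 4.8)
The plan is to pass from the potential-outcome risks to $R_{ATE}$ by an elementary squared-error split and then eliminate the unobservable counterfactual risk through Lemma~\ref{lemma: CF}. First I would expand the integrand of $R_{ATE}(\hat{\mu}_a)$ as $\hat{\phi}(\mathbf{x}) - \phi(\mathbf{x}) = (\hat{\mu}_1(\mathbf{x}) - \mu_1(\mathbf{x})) - (\hat{\mu}_0(\mathbf{x}) - \mu_0(\mathbf{x}))$ and apply $(u-v)^2 \le 2u^2 + 2v^2$. Since $\int (\hat{\mu}_a - \mu_a)^2\, dF = \int l_{\hat{\mu}_a}\, dF$ under the squared loss of Definition~\ref{def: f and cf loss}, integrating against $F$ gives $R_{ATE}(\hat{\mu}_a) \le 2\int l_{\hat{\mu}_1}\, dF + 2\int l_{\hat{\mu}_0}\, dF$; this is the only place the factor $2$ enters.

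The core step is to decompose each population integral with the mixture identity $F = P_1 F_1 + P_0 F_0$. This writes $\int l_{\hat{\mu}_1}\, dF = P_1 R_F(\hat{\mu}_1) + P_0 R_{CF}(\hat{\mu}_1)$ and $\int l_{\hat{\mu}_0}\, dF = P_0 R_F(\hat{\mu}_0) + P_1 R_{CF}(\hat{\mu}_0)$, so that the sum equals $P_1 R_F(\hat{\mu}_1) + P_0 R_F(\hat{\mu}_0)$ plus the $P$-weighted counterfactual combination $P_0 R_{CF}(\hat{\mu}_1) + P_1 R_{CF}(\hat{\mu}_0)$. This last combination is exactly the counterfactual risk controlled by Lemma~\ref{lemma: CF}, which bounds it by $P_0 R_F(\hat{\mu}_1) + P_1 R_F(\hat{\mu}_0) + C_{l_{\hat{\mu}_a}}(\sqrt{\mathcal{E}(F_1,F)} + \sqrt{\mathcal{E}(F_0,F)})$; the constant and the two energy-distance terms are inherited from that lemma, where they arise from $p$-fold integration by parts of $P_0 l_{\hat{\mu}_1} - P_1 l_{\hat{\mu}_0}$ against $F_1 - F$ and $F_0 - F$ followed by Cauchy--Schwarz on the representation $\mathcal{E}(G,H) = 2\int (G-H)^2\, d\mathbf{x}$. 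Substituting and collecting the factual terms, the coefficients combine as $P_1 + P_0 = 1$ for each of $R_F(\hat{\mu}_1)$ and $R_F(\hat{\mu}_0)$, so the weights cancel exactly and the bound collapses to $2\big(R_F(\hat{\mu}_1) + R_F(\hat{\mu}_0) + C_{l_{\hat{\mu}_a}}(\sqrt{\mathcal{E}(F_1,F)} + \sqrt{\mathcal{E}(F_0,F)})\big)$, i.e. Inequality~(\ref{equ:ATE unweighted}). I would emphasize that the cancellation must be carried out with the exact mixture weights rather than the cruder bound $P_a \le 1$, since only the exact version produces the clean coefficient $1$ in front of each factual risk.

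For the weighted statement~(\ref{equ:ATE weighted}) I would run the identical argument with $F_n$ in place of $F$, using $R_{ATE}(\hat{\mu}_a) = R_{ATE,n}(\hat{\mu}_a)$ as $n \to \infty$ and invoking the weighted half of Lemma~\ref{lemma: CF}, Inequality~(\ref{equ:R_CF weighted}). The one genuinely new ingredient, and the step I expect to be the main obstacle, is that the exact finite-sample mixture identity fails: a direct computation gives $P_1 F_{n,1,\mathbf{w}} + P_0 F_{n,0,\mathbf{w}} = \tfrac{1}{n}\sum_i w_i \mathbf{1}(\mathbf{X}_i \le \mathbf{x})$, which equals $F_n$ only when all $w_i = 1$. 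One must therefore argue that, under the balancing constraints $\sum_i w_i A_i = n_1$, $\sum_i w_i (1-A_i) = n_0$, $w_i \ge 0$, both $F_{n,1,\mathbf{w}}$ and $F_{n,0,\mathbf{w}}$ (as well as $F_n$) converge to the marginal $F$ as $n \to \infty$, so the mixture identity, and hence the whole decomposition, holds in the limit. This asymptotic consistency of the weighted empirical CDFs is precisely why the weighted conclusion is stated only for $n \to \infty$; once it is in hand the algebraic collapse is identical to the unweighted case.
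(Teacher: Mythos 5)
Your proof of the unweighted bound (\ref{equ:ATE unweighted}) is correct and follows essentially the paper's own route: the factor-$2$ squared-error split, identification of the resulting population integrals with factual and counterfactual risks, elimination of $R_{CF}(\hat{\mu}_a)$ via Lemma~\ref{lemma: CF}, and the collapse $P_1 + P_0 = 1$. Your version is in fact tidier: you use the exact mixture identity $F = P_1 F_1 + P_0 F_0$ where the paper uses the cruder step $\int g\, dF \le \int g\, dF_1 + \int g\, dF_0$ (written there, sloppily, as an equality), and by reading $l_{\hat{\mu}_a}$ literally from Definition~\ref{def: f and cf loss} you avoid the paper's detour through the noise variances $\sigma_Y^2$ (its Claims~1 and~2), which it introduces only to discard in the final line.

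The gap is in the weighted half. You correctly observe that $P_1 F_{n,1,\mathbf{w}} + P_0 F_{n,0,\mathbf{w}} = \frac{1}{n}\sum_i w_i \mathbf{1}(\mathbf{X}_i \le \mathbf{x}) \ne F_n$, but your proposed repair --- that the balancing constraints force $F_{n,1,\mathbf{w}}$ and $F_{n,0,\mathbf{w}}$ to converge to $F$ --- is false for the weights the theorem quantifies over. The constraints $\sum_i w_i A_i = n_1$, $\sum_i w_i (1-A_i) = n_0$, $w_i \ge 0$ fix only the total mass in each arm: taking $w_{i_0} = n_1$ on a single treated unit and $w_i = 0$ on every other treated unit is admissible, yet then $F_{n,1,\mathbf{w}}$ is a single-atom CDF that converges to nothing resembling $F$. (Convergence to $F$ does hold for the energy-distance-minimizing weights $\mathbf{w}^*$, by Theorem~3.1 of \citet{huling2020energy}, but the statement to be proved allows arbitrary admissible $\mathbf{w}$; indeed its right-hand side penalizes bad weights through the $\sqrt{\mathcal{E}}$ terms.) The way to close this --- and what the paper's proof of the weighted part of Lemma~\ref{lemma: CF} actually does --- is to dispense with any mixture identity: split $\int l\, dF_n = \int l\, dF_{n,a,\mathbf{w}} + \int l\, d[F_n - F_{n,a,\mathbf{w}}]$ and bound the discrepancy term by the Koksma--Hlawka-type inequality (Lemma~3.3 of \citet{huling2020energy}), namely $\left|\int g\, d[F_n - F_{n,a,\mathbf{w}}]\right| \le C \sqrt{\mathcal{E}(F_n, F_{n,a,\mathbf{w}})}$, so that the imbalance of an arbitrary $\mathbf{w}$ is absorbed into exactly the energy-distance terms appearing in (\ref{equ:ATE weighted}) rather than assumed to vanish. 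To be fair, the paper itself glosses over this point: its weighted analogue of Claim~2 is asserted as an unproved ``Claim~3'' said to be ``similar'' to the unweighted claims, which it is not, for precisely the reason you flagged.
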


Similar in spirit with the analysis approach in \citet{shalit2017estimating, johansson2020generalization}, the basic idea of the proof of Theorem \ref{thm: bound error ATE} is to bound $R_{CF}(\cdot)$ using $R_{F}(\cdot)$ and distance between the treated and control distributions, the latter of which exploits the properties of energy distance investigated in \citet{huling2020energy}. To better understand Theorem~\ref{thm: bound error ATE}, we can decompose the bound into two parts, namely, training error for factual outcomes, and the distance between treated and control distributions. 

We highlight the differences between our results and the prior results. First, our target estimator is for the ATE in comparison to the ITE in \citet{shalit2017estimating} and the CATE in \citet{johansson2020generalization}. Second, we use weighted energy distance between $F_{n,a, \mathbf{w}}$ and $F_n$ in the analysis of our bound. The benefit of using weighted energy distance comes from the key observation that if $\mathbf{w}^*$ satisfies 
\begin{equation} \label{equ: w^*}
\begin{split}
            \mathbf{w}^* &\in \argmin_{\mathbf{w}}\left(\sqrt{\mathcal{E}(F_{n,1,\mathbf{w}}, F_n)} + \sqrt{\mathcal{E}(F_{n,0,\mathbf{w}}, F_n)} \right) \\
    &\sum_{i=1}^n w_i A_i = n_1, \sum_{i=1}^n w_i (1 - A_i) = n_0, w_i \geq 0,
\end{split}
\end{equation}

then $\lim_{n \rightarrow \infty} \mathcal{E}(F_{n,1,\mathbf{w}^*}, F_n) = 0$, and $\lim_{n \rightarrow \infty} \mathcal{E}(F_{n,0,\mathbf{w}^*}, F_n) = 0$
almost surely. This essentially adapts the proof of Theorem 3.1 in \citet{huling2020energy} to our loss function. It follows that Inequality~(\ref{equ:ATE weighted}) becomes
\begin{align*}
     & \quad  R_{ATE} (\hat{\mu}_a) =  R_{ATE, n} (\hat{\mu}_a)  \leq 2 \Bigg( R_F (\hat{\mu}_1,\mathbf{w^*})  + R_{F} (\hat{\mu}_0,\mathbf{w^*})  \Bigg).
\end{align*}
which yields a much tighter bound when the training error for factual outcomes is minimized. 

Motivated by Theorem \ref{thm: bound error ATE}, we propose a new objective function equipped with the interpretable deep learning model to estimate the ATE using observational data.

\subsection{Proposed Objective Function}\label{sec:objective}

Before proposing our objective function, we first state the definition of balancing score and a classic result from \citet{rosenbaum1983central}.
\begin{definition}[Balancing Score]
A balancing score $b(\mathbf{X})$ is a function of the observed covariate $\mathbf{X}$ such that the conditional distribution of $\mathbf{X}$ given $b(\mathbf{X})$ is the same for treated and control units (i.e. $A \indep \mathbf{X} | b(\mathbf{X})$). 
\end{definition}
The weights $\mathbf{w}$ defined in Definition \ref{def: weighted energy distance} is a type of balancing score since $A \indep \mathbf{X}|\mathbf{w}$. Based on the definition of balancing score, we can state the classic result about the sufficiency of balancing score. 

\begin{theorem}[\citet{rosenbaum1983central}]
If the ATE is identifiable from observational data by adjusting for  $\mathbf{X}$ (i.e. $\tau = \mathbb{E}\left( \mathbb{E}(Y(1)|\mathbf{X}) - \mathbb{E}(Y(0)|\mathbf{X})  \right) $), then we have
\begin{align*}
    \tau = \mathbb{E}\left( \mathbb{E}(Y(1)|\mathbf{X}, \mathbf{w}) - \mathbb{E}(Y(0)|\mathbf{X}, \mathbf{w})   \right).
\end{align*}
\end{theorem}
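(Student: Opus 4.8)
The plan is to recognize that the substantive content of this classical result is the \emph{sufficiency of the balancing score}: conditioning on $\mathbf{w}$ alone already recovers $\tau$. Indeed, since $\mathbf{w} = b(\mathbf{X})$ is a deterministic function of $\mathbf{X}$, the $\sigma$-algebra generated by $(\mathbf{X}, \mathbf{w})$ coincides with that generated by $\mathbf{X}$, so $\mathbb{E}(Y(a) \mid \mathbf{X}, \mathbf{w}) = \mathbb{E}(Y(a) \mid \mathbf{X})$ and the displayed identity is immediate from the hypothesis. I would therefore prove the stronger and more useful statement that $\tau = \mathbb{E}\left( \mathbb{E}(Y(1)\mid\mathbf{w}) - \mathbb{E}(Y(0)\mid\mathbf{w}) \right)$, from which the stated form follows a fortiori. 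The overall strategy has two stages: first transfer strong ignorability from $\mathbf{X}$ to the coarser conditioning variable $\mathbf{w}$, and then run the usual identification argument at the level of $\mathbf{w}$.

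For the first and main stage, I would show that $\{Y(0), Y(1)\} \indep A \mid \mathbf{w}$. Writing $e(\mathbf{X}) \equiv \mathbb{P}(A = 1 \mid \mathbf{X})$ for the propensity score, the key preliminary observation is that the balancing property $A \indep \mathbf{X} \mid \mathbf{w}$ forces $e(\mathbf{X})$ to be $\sigma(\mathbf{w})$-measurable: the balancing condition is equivalent to $\mathbb{P}(A = 1 \mid \mathbf{X}, \mathbf{w}) = \mathbb{P}(A = 1 \mid \mathbf{w})$, and because $\mathbf{w}$ is a function of $\mathbf{X}$ the left-hand side equals $e(\mathbf{X})$; hence $e(\mathbf{X}) = \mathbb{P}(A = 1 \mid \mathbf{w})$ depends on $\mathbf{X}$ only through $\mathbf{w}$. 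I would then compute the propensity score conditional on the potential outcomes and $\mathbf{w}$ by iterated expectation over $\mathbf{X}$,
\[
\mathbb{P}(A = 1 \mid Y(0), Y(1), \mathbf{w}) = \mathbb{E}\!\left[\, \mathbb{P}(A = 1 \mid Y(0), Y(1), \mathbf{X}) \;\middle|\; Y(0), Y(1), \mathbf{w} \right],
\]
and invoke strong ignorability given $\mathbf{X}$ to replace the inner conditional probability with $e(\mathbf{X})$. Since $e(\mathbf{X}) = \mathbb{P}(A = 1 \mid \mathbf{w})$ is already $\sigma(\mathbf{w})$-measurable, the outer expectation returns it unchanged, yielding $\mathbb{P}(A = 1 \mid Y(0), Y(1), \mathbf{w}) = \mathbb{P}(A = 1 \mid \mathbf{w})$, which is exactly ignorability at the level of $\mathbf{w}$.

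In the second stage I would use this transferred ignorability to identify each potential-outcome mean from observable quantities: for $a \in \{0,1\}$, $\mathbb{E}(Y(a) \mid \mathbf{w}) = \mathbb{E}(Y(a) \mid A = a, \mathbf{w}) = \mathbb{E}(Y \mid A = a, \mathbf{w})$, where the first equality is ignorability given $\mathbf{w}$ (together with positivity, so the conditioning event has positive probability) and the second is the consistency relation $Y = Y(A)$. Taking the outer expectation over $\mathbf{w}$ and using the tower property then gives $\tau = \mathbb{E}(Y(1) - Y(0)) = \mathbb{E}\left( \mathbb{E}(Y(1)\mid\mathbf{w}) - \mathbb{E}(Y(0)\mid\mathbf{w}) \right)$, and hence the stated identity. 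I expect the main obstacle to be the first stage: carefully justifying that the balancing property makes $e(\mathbf{X})$ measurable with respect to $\mathbf{w}$ and that this measurability lets the outer conditional expectation pass through untouched, which is precisely where both hypotheses---ignorability given $\mathbf{X}$ and the balancing property of $\mathbf{w}$---must be combined.
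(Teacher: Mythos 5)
Your proof is correct; note, however, that the paper itself gives no proof of this statement---it is quoted as a classical result from \citet{rosenbaum1983central}---so what you have done is reconstruct the argument of the cited reference rather than parallel an internal derivation. Your reconstruction is faithful and sound. The opening observation is exactly right: as literally displayed, the identity is trivial, because $\mathbf{w}=b(\mathbf{X})$ is a function of $\mathbf{X}$, hence $\sigma(\mathbf{X},\mathbf{w})=\sigma(\mathbf{X})$ and $\mathbb{E}(Y(a)\mid\mathbf{X},\mathbf{w})=\mathbb{E}(Y(a)\mid\mathbf{X})$; your decision to prove the stronger statement $\tau=\mathbb{E}\left(\mathbb{E}(Y(1)\mid\mathbf{w})-\mathbb{E}(Y(0)\mid\mathbf{w})\right)$ captures the substantive content (sufficiency of the balancing score) that the paper actually relies on when it substitutes the energy-distance weights for the propensity score. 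Your stage 1 is precisely Rosenbaum--Rubin's argument: balancing $A \indep \mathbf{X}\mid\mathbf{w}$ together with $\mathbf{w}$ being a function of $\mathbf{X}$ forces $e(\mathbf{X})=\mathbb{P}(A=1\mid\mathbf{w})$ to be $\sigma(\mathbf{w})$-measurable, and iterated expectation over $\mathbf{X}$ then transfers strong ignorability from $\mathbf{X}$ to $\mathbf{w}$; stage 2 is the standard identification step, with positivity at the level of $\mathbf{w}$ inherited from positivity given $\mathbf{X}$ since $\mathbb{P}(A=1\mid\mathbf{w})=\mathbb{E}\left[e(\mathbf{X})\mid\mathbf{w}\right]\in(0,1)$. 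One caveat worth flagging, directed at the paper rather than at you: in the paper's actual construction each weight $w_i$ solves a sample-level optimization and therefore depends on the whole sample $\{(\mathbf{X}_j,A_j)\}_{j=1}^n$, not on $\mathbf{X}_i$ alone, so treating $\mathbf{w}$ as a deterministic balancing score $b(\mathbf{X})$---as both you and the paper's own definition do---is an idealization; under that idealization your proof is complete.
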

Propensity score $P(A = a| \mathbf{X} = \mathbf{x})$ is a popular balancing score used for estimation of ATE. However, propensity score is  obtained from a treatment assignment model and does not explicitly balance treated and control distributions under finite samples. Plus, if the propensity score model is incorrectly specified, it may lead to biased estimation of propensity score. As a result, the estimation of ATE is likely to be biased. To address this limitation, we propose to use the weights $\mathbf{w}$ from the weighted energy distance as our balancing score in the objective function. The sufficiency of the balancing score implies that to obtain an unbiased estimation of ATE, it suffices to adjust for the observed covariate $\mathbf{X}$ that is relevant to the estimation of balancing score. Therefore, we train the model by minimizing
\begin{align*}
                    \hat{\theta}, \hat{\mathbf{w}} &= \argmin_{\theta,\mathbf{w}} \hat{R}(\mathbf{X}, \mathbf{A};\theta, \mathbf{w})  \\
    s.t. \quad &\sum_{i=1}^n w_i A_i = n_1, \quad \sum_{i=1}^n w_i (1 - A_i) = n_0, w_i \geq 0 
\end{align*}
where
\begin{equation} \label{equ: obj}
    \begin{split}
        \hat{R}(\mathbf{X},\mathbf{A};\theta, &\mathbf{w}) = \frac{1}{n} \sum_{i=1}^n \left(Y_i - Q^{\text{NN}}(\mathbf{X}_i, A_i; \theta)  \right)^2 + \alpha \left( \sqrt{\mathcal{E} (F_{n,1, \mathbf{w}}, F_{n})}  + \sqrt{ \mathcal{E} (F_{n,0, \mathbf{w}}, F_{n})} \right).
    \end{split}
\end{equation}
Here, $\alpha \in \mathbb{R}^+$ is a hyperparameter, and  $Q^{\text{NN}}$ represent the neural network models for estimating $\hat{Q}$, and $\hat{Q}$ implicitly contains $\mathbf{w}$. With the fitted model $\hat{Q}$ in hand, we can then estimate the ATE using some downstream estimator such as
\begin{align*}
        \hat{\tau} = \frac{1}{n} \sum_{i=1}^n \left( \hat{Q}(\mathbf{X}_i, 1; \theta) - \hat{Q}(\mathbf{X}_i, 0; \theta) \right).
\end{align*}

Under some mild conditions, the algorithmic convergence of training error for factual outcomes predictions is guaranteed under the Neural Tangent Kernel (NTK) regime (\citet{du2018gradient}).

\begin{proposition} \label{prop:NTK}
Let $\mathbf{w}^*$ be as defined in (\ref{equ: w^*}). We consider a neural network of the form $Q^{\text{NN}}(\mathbf{X}_i, A; \theta) = \frac{1}{\sqrt{m}} \sum_{r = 1}^m a_r \sigma((\theta^r)^\top (\mathbf{X}_i,A) ) $ where $\theta^r \in \mathbb{R}^p$ is the weight vectors in the first hidden layer connecting to the r-th neuron, $a_r \in \mathbb{R}$ is the output weight, $\sigma (\cdot)$ is the ReLU activation function, and $m$ is the width of hidden layer. Assuming that $\forall i, \|(\mathbf{X}_i,A_i) \|_2 = 1, |Y_i (a)| < C$ for some constant $C$, $m = \Omega (n^6/\lambda_0^4 \delta^3)$, and we i.i.d initialize $\theta^r \sim \mathcal{N}(\mathbf{0},\mathbf{I}), a_r \sim \text{Unif}\{-1,1\}$ for $r \in [m]$, then with at least probability at $1 - \delta$ over the initialization, we have the linear convergence rate for $\hat{R} (\mathbf{X}, \mathbf{A}; \theta, \mathbf{w}^*) $ under gradient descent algorithm. Here
$\lambda_0 = \lambda_{\min} (\mathbf{H}^\infty)$, where
\begin{align*}
    (\mathbf{H}^\infty)_{ij} =\left(\frac{1}{2} - \frac{\text{arccos}((\mathbf{X}_i,A_i)^\top (\mathbf{X}_i,A_i) ) }{2 \pi}   \right) 
    \left((\mathbf{X}_i,A_i)^\top (\mathbf{X}_i,A_i)  \right).
\end{align*}
\end{proposition}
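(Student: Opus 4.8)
The plan is to exploit the structure of the objective function so as to reduce the statement to the now-standard over-parameterization analysis of \citet{du2018gradient}. The crucial observation is that $\mathbf{w}^*$, defined in (\ref{equ: w^*}), minimizes only the energy-distance terms and does \emph{not} depend on the network parameters $\theta$. Consequently, once we fix $\mathbf{w} = \mathbf{w}^*$, the regularization term $\alpha(\sqrt{\mathcal{E}(F_{n,1,\mathbf{w}^*}, F_n)} + \sqrt{\mathcal{E}(F_{n,0,\mathbf{w}^*}, F_n)})$ is a constant with respect to $\theta$. Therefore $\nabla_\theta \hat{R}(\mathbf{X}, \mathbf{A}; \theta, \mathbf{w}^*) = \nabla_\theta \frac{1}{n}\sum_{i=1}^n (Y_i - Q^{\text{NN}}(\mathbf{X}_i, A_i; \theta))^2$, and the gradient descent trajectory on $\hat{R}(\cdot;\theta,\mathbf{w}^*)$ coincides exactly with gradient descent on the factual squared loss; since $\hat{R}$ differs from this loss only by an additive constant, establishing a linear rate for the squared loss establishes it for $\hat{R}$. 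It therefore suffices to treat the factual regression problem.

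Having made this reduction, I would verify that the hypotheses match those of \citet{du2018gradient} and invoke their analysis with the augmented input $\tilde{\mathbf{x}}_i \equiv (\mathbf{X}_i, A_i)$, which satisfies $\|\tilde{\mathbf{x}}_i\|_2 = 1$ and label bound $|Y_i(a)| < C$. The argument runs through the neural tangent kernel Gram matrix. Let $\mathbf{H}(t)$ have entries $\mathbf{H}_{ij}(t) = \frac{1}{m}\sum_{r=1}^m \tilde{\mathbf{x}}_i^\top \tilde{\mathbf{x}}_j \mathbf{1}\{(\theta^r(t))^\top \tilde{\mathbf{x}}_i \geq 0, (\theta^r(t))^\top \tilde{\mathbf{x}}_j \geq 0\}$, and let $\mathbf{H}^\infty = \mathbb{E}_{\theta \sim \mathcal{N}(\mathbf{0}, \mathbf{I})}[\mathbf{H}(0)]$, whose $(i,j)$ entry equals $\big(\frac{1}{2} - \frac{\arccos(\tilde{\mathbf{x}}_i^\top \tilde{\mathbf{x}}_j)}{2\pi}\big)(\tilde{\mathbf{x}}_i^\top \tilde{\mathbf{x}}_j)$. (We note that the stated formula should carry the cross inner product $\tilde{\mathbf{x}}_i^\top \tilde{\mathbf{x}}_j$ for $i \neq j$; the self-inner-products written there appear to be a typo.) Writing $\mathbf{u}(t) \in \mathbb{R}^n$ for the prediction vector and $\mathbf{y}$ for the labels, the gradient flow obeys $\frac{d}{dt}\|\mathbf{u}(t) - \mathbf{y}\|_2^2 = -2(\mathbf{u}(t) - \mathbf{y})^\top \mathbf{H}(t)(\mathbf{u}(t) - \mathbf{y})$, so a uniform bound $\lambda_{\min}(\mathbf{H}(t)) \geq \lambda_0/2$ immediately gives the linear rate $\|\mathbf{u}(t) - \mathbf{y}\|_2^2 \leq e^{-\lambda_0 t}\|\mathbf{u}(0) - \mathbf{y}\|_2^2$, with an analogous geometric recursion for discrete steps under a small learning rate.

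The remaining work is to secure that spectral lower bound, which I would carry out in order: (i) a concentration argument (Hoeffding over the $m$ i.i.d.\ neurons plus a union bound) showing $\|\mathbf{H}(0) - \mathbf{H}^\infty\|_2 \leq \lambda_0/4$ with probability at least $1-\delta$ once $m = \Omega(n^2 \log(n/\delta)/\lambda_0^2)$; (ii) a perturbation bound showing that if every weight vector stays within radius $R = O(\sqrt{n}/(\sqrt{m}\,\lambda_0))$ of its initialization, then few neurons flip activation and $\|\mathbf{H}(t) - \mathbf{H}(0)\|_2 \leq \lambda_0/4$; and (iii) an induction coupling the loss decay to the weight displacement, bounding $\|\theta^r(t) - \theta^r(0)\|_2 \leq R$ for all $t$. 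The \emph{main obstacle} is step (iii): one must simultaneously maintain the invariant $\lambda_{\min}(\mathbf{H}(s)) \geq \lambda_0/2$ for all $s \leq t$ (which drives the decay) and use that very decay to bound the cumulative gradient, and hence the weight movement, so as to preserve the invariant. Closing this self-referential loop is exactly where the strong over-parameterization $m = \Omega(n^6/\lambda_0^4 \delta^3)$ is consumed; it is the only genuinely technical ingredient, whereas the decoupling observation in the first paragraph is what reduces everything else to a direct appeal to \citet{du2018gradient}.
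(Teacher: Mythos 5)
Your proposal is correct and takes essentially the same route the paper intends: the paper gives no proof of this proposition beyond the citation to \citet{du2018gradient}, and the intended argument is exactly your decoupling observation — with $\mathbf{w}^*$ fixed, the energy-distance term is constant in $\theta$, so gradient descent on $\hat{R}(\mathbf{X},\mathbf{A};\theta,\mathbf{w}^*)$ coincides with gradient descent on the factual squared loss, to which the over-parameterized ReLU convergence theorem of \citet{du2018gradient} applies with augmented unit-norm inputs $(\mathbf{X}_i,A_i)$ and the stated width $m = \Omega(n^6/\lambda_0^4\delta^3)$. You are also right that the paper's displayed $\mathbf{H}^\infty$ should involve the cross inner products $(\mathbf{X}_i,A_i)^\top(\mathbf{X}_j,A_j)$ rather than the self inner products; that is a typo in the statement.
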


Detailed explanation of the NTK theory can be found in \citet{du2018gradient, bu2021dynamical}. The algorithmic convergence of training error provides much tighter bound for the error in estimation of ATE, lending support to the superior performance of the proposed method. 

\begin{wrapfigure}{l}{0.5\textwidth}
  \begin{center}
    \includegraphics[width=0.48\textwidth]{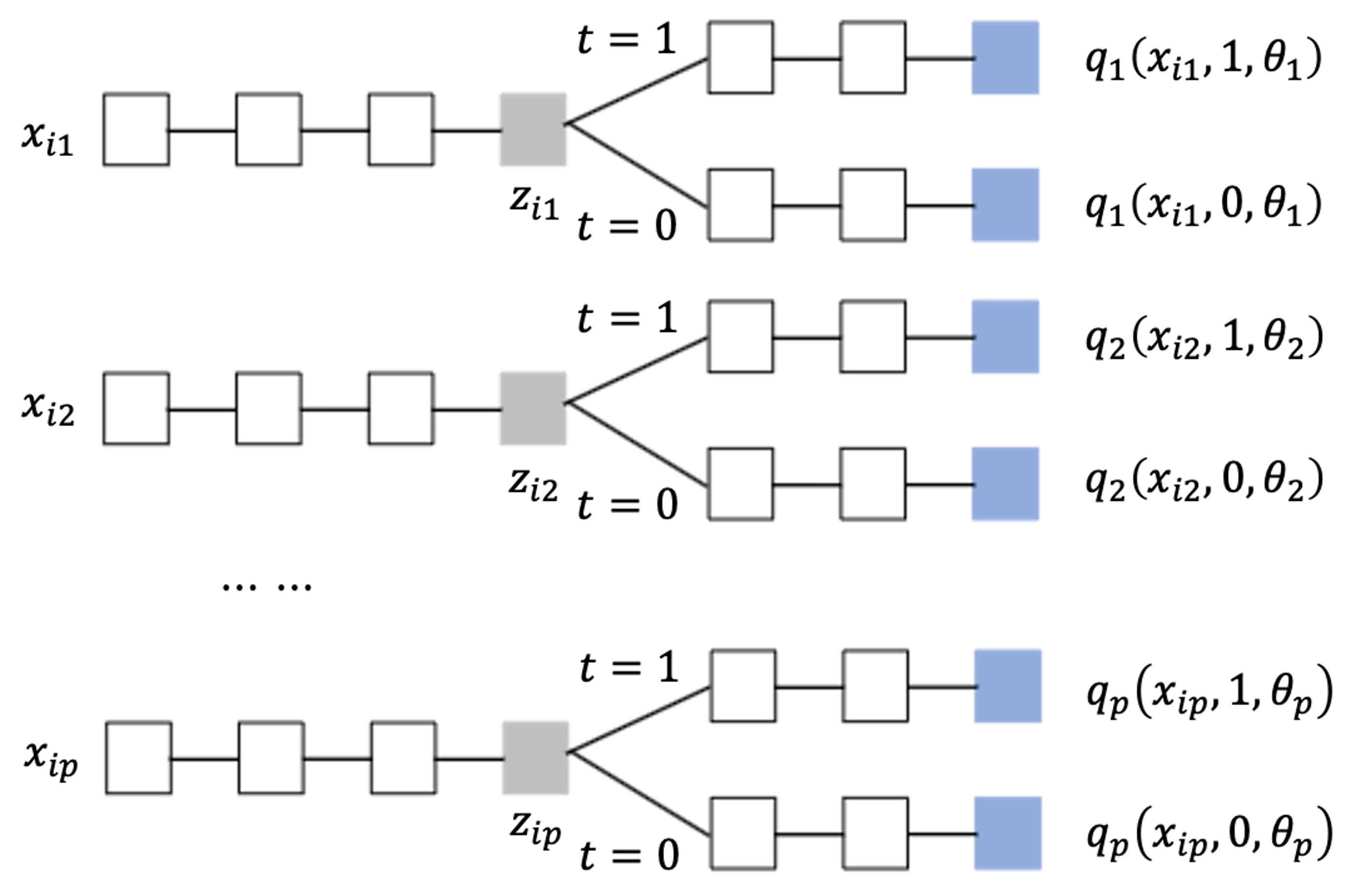}
  \end{center}
  \caption{Architecture of additive neural network for predicting factual and counterfactual outcomes.}
  \label{fig: architecture}
\end{wrapfigure}

While deep learning models offer very impressive predictive performance, their power often comes at the expense of interpretability, or lack thereof. To balance the trade-off between predictive performance and interpretability, \citet{agarwal2020neural} proposed the neural additive models (NAMs) combining the expressivity of DNNs with the interpretability of generalized additive models. Such models can be particularly useful for high stakes decision-making domains such as medicine where interpretability is highly desired or even necessary.  As such, we propose to use NAMs to predict factual and counterfactual outcomes as follows,
\begin{align*}
    \hat{Y}_i(a) = Q^{\text{NAM}}(\mathbf{X}_i, a; \theta) = \sum_{j=1}^p q_j (x_{ij},a;\theta_j)
\end{align*}
where $X_{i} = (x_{i1},\cdots,x_{ip}) \in \mathbb{R}^p$, $Q^{\text{NAM}}(\cdot,\cdot;\theta)$ is the fitted model, and $q_j$ is a univariate shape function.  Figure \ref{fig: architecture} shows the architecture of our additive neural network models, where $z_{ij} \in \mathbb{R}$ is the shared representation for $q_j(\cdot,1;\theta_j)$ and $q_j(\cdot,0;\theta_j)$. We can replace the $Q^{\text{NN}}$ with $Q^{\text{NAM}}$ in the objective function (\ref{equ: obj}) as appropriate.

\subsection{Covariate Balancing Score Weighted Regularization}

To further improve performance, we propose a refinement of the objective function in Section~\ref{sec:objective}  through weighted regularization, which is motivated by the following weighted estimator of ATE,
\begin{equation}\label{equ:weighted estimator}
    \hat{\tau}_{\mathbf{w}} = \frac{1}{n_1} \sum_{i=1}^n w_i Y_i A_i - \frac{1}{n_0} \sum_{i=1}^n w_i Y_i (1 - A_i).
\end{equation}

We define the regularization term $\gamma (\mathbf{Y}_i, A_i, \mathbf{X}_i; \theta, \epsilon)$ as follows,
\[\gamma (\mathbf{Y}_i, A_i, \mathbf{X}_i; \theta, \epsilon) = (Y_i - \tilde{Q}(\mathbf{X}_i, A_i, \theta))^2.\]
Here, $\tilde{Q}(\mathbf{X}_i, A_i; \theta) = 
    Q^{\text{NN}}(\mathbf{X}_i, A_i; \theta) + \epsilon \left( \frac{w_i A_i n}{n_1} - \frac{w_i (1-A_i)n}{n_0}\right)$, where the second term can be viewed as a calibration term and $\epsilon$ is a tuning parameter.
    
Then the refined objective function is defined as follows,
\begin{equation}
\begin{split}
    \hat{\theta}, \hat{\mathbf{w}}, \hat{\epsilon} &= \argmin_{\theta,\mathbf{w},\epsilon} \hat{R}(\mathbf{X},\mathbf{A};\theta, \mathbf{w}) 
+ \beta \frac{1}{n} \sum_{i=1}^n \gamma (\mathbf{Y}_i, A_i, \mathbf{X}_i; \theta, \epsilon)    \\
    s.t. \quad &\sum_{i=1}^n w_i A_i = n_1, \quad \sum_{i=1}^n w_i (1 - A_i) = n_0, w_i \geq 0 \\
\end{split}  \label{equ:modified_objective}  
\end{equation}
where $\hat{R}(\mathbf{X},\mathbf{A};\theta, \mathbf{w}) $ is defined in (\ref{equ: obj}), and $\beta \in \mathbb{R}^+$ is a hyperparameter. It follows that our downstream estimator for ATE becomes
\begin{align*}
        \hat{\tau}^{\text{wreg}} = \frac{1}{n} \sum_{i=1}^n \left( \tilde{Q}(\mathbf{X}_i, 1; \theta) - \tilde{Q}(\mathbf{X}_i, 0; \theta) \right).
\end{align*}

The idea behind the weighted regularization process can be understood as follows. Recall the general procedure for estimating treatment effect has two steps: 1) fit models for the conditional outcome and balancing score; 2) plug the fitted models into a downstream estimator.  Of note, there is a large body of literature on semi-parametric statistical methods for estimation of ATE \citep{kennedy2016semiparametric}. In this work, we focus on the non-parametric estimator in (\ref{equ:weighted estimator}) that uses energy distance balancing scores. Since this estimator does not require correct specification of the treatment assignment model, it is more robust than the Dragonnet model and target regularization in \citet{shi2019adapting}.

Of particular note, under some mild conditions, the estimator $\hat{\tau}$ is expected to have some desirable asymptotic properties such as
\begin{itemize}
    \item Robustness of the estimator, since the energy distance balancing score weight is non-parametric;
    \item Efficiency: $\hat{\tau}$ has the lowest variance of any consistent estimator of $\tau$. This means that $\hat{\tau}$ is the most data efficient estimator.   
\end{itemize}
In particular, two important conditions are needed for these properties to hold: 1) $\hat{Q}$ and $\hat{\mathbf{w}}$ are consistent estimators of outcomes and balancing score; 2) the following non-parametric estimating equation is satisfied:
\begin{equation} \label{equ: estimating equ}
    \begin{split}
       & \frac{1}{n}  \sum_{i=1}^n \Bigg(  \hat{Q}(\mathbf{X}_i, 1; \theta) - \hat{Q}(\mathbf{X}_i, 0; \theta) + \left( \frac{w_i A_i n}{n_1} - \frac{w_i (1-A_i)n}{n_0}      \right)(Y_i - \hat{Q}(\mathbf{X}_i, A_i; \theta))  - \tau \Bigg)  = 0
    \end{split}
\end{equation}
More details can be found in \citet{van2011targeted} and \citet{chernozhukov2017double}. The most important observation here is that minimizing the modified objective~\eqref{equ:modified_objective} would force $\tilde{Q}$, $\hat{\mathbf{w}}$, and $\hat{\tau}^{\text{wreg}} $ to satisfy the estimating equation (\ref{equ: estimating equ}) because
\begin{align*}
        \frac{\partial}{\partial \epsilon} \left( \hat{R}(\mathbf{X};\theta, \mathbf{w}) 
 + \beta \frac{1}{n} \sum_{i=1}^n \gamma (\mathbf{Y}_i, A_i, \mathbf{X}_i; \theta, \epsilon) \right) = 0.
\end{align*}
As long as $\tilde{Q}$ and $\hat{\mathbf{w}}$ are consistent, by imposing the weighted regularization process, the estimator $\hat{\tau}^{\text{wreg}}$ is expected to have the aforementioned desirable asymptotic properties. More detailed discussions about the rationale for this regularization process can be found in \citet{shi2019adapting}.

\section{Numerical Experiments} \label{sec: simulation}

We conduct numerical experiments to demonstrate the advantages of our proposed model compared to current state-of-the-art. Since ground truth causal effects are in general unavailable in real-world data, our experiments are conducted using semi-synthetic data derived from the Infant Health and Development Program (IHDP) and from the 2018 Atlantic Causal Inference Conference (ACIC) competition, two popular benchmark datasets for assessing causal inference methods.

\underline{\textbf{IHDP.}}  IHDP is a semi-synthetic dataset constructed from the Infant Health and Development Program, a randomized experiment started in 1985. \citet{hill2011bayesian} introduced this data in their 2011 paper. In this program, the treated group was provided with intensive high-quality child care and home visits from trained specialists. At the end of the intervention, when the children were three years old, the intervention was deemed successful in raising cognitive test scores of treated children in comparison to controls. The dataset includes numerous measurements on participants such as birth weight, neonatal health index, first born weeks born preterm, head circumference, sex and twin status. It also contains information of mother's behaviors engaged in during the pregnancy such as smoking status, alcohol usage and drug usage. The data generating process of the synthetic part of the data can be found in \citet{hill2011bayesian}. The data has 747 observations with 26 features.


\underline{\textbf{ACIC 2018.}} This dataset was developed from the 2018 Atlantic Causal Inference Conference competition. The dataset include real-world clinical measurements taken from the Linked Birth and Infant Death Data (LBIDD) \citep{mathews1998infant}, e.g., infant mortality statistics from the linked birth/infant death data set (linked file) for the 1999 period, broken down by a variety of maternal and infant characteristics. In addition, it includes demographics of mothers and infants such as education, prenatal care, race, birth weight and days of born preterm. The outcome is the mortality rate. The treatment/exposure is the race of mothers (black or white). The synthetic part of the data is generated from 63 distinct data generating process settings. We follow the same selection criterion as in \citet{shi2019adapting}, that is, we randomly pick 3 datasets of size either 5,000 and 10,000.

\subsection{Interpretability} \label{subsec: interpret}

The interpreability of NAMs is due, in part, to the fact that it enables visualization of the impact of individual features on estimation of ATE. Since each feature is handled independently by a learned shape function parameterized by a neural net, plotting the individual shape functions offers a natural way to visualize their impact. These shape function plots can provide an exact description of how NAMs estimate the ATE, not merely just an explanation. This helps, for example, an decision-maker, say from medicine, to understand how to interpret the models and understand exactly how the estimated ATE is associated with risk factors. 

\begin{wrapfigure}{r}{0.5\textwidth}
  \begin{center}
    \includegraphics[width=0.48\textwidth]{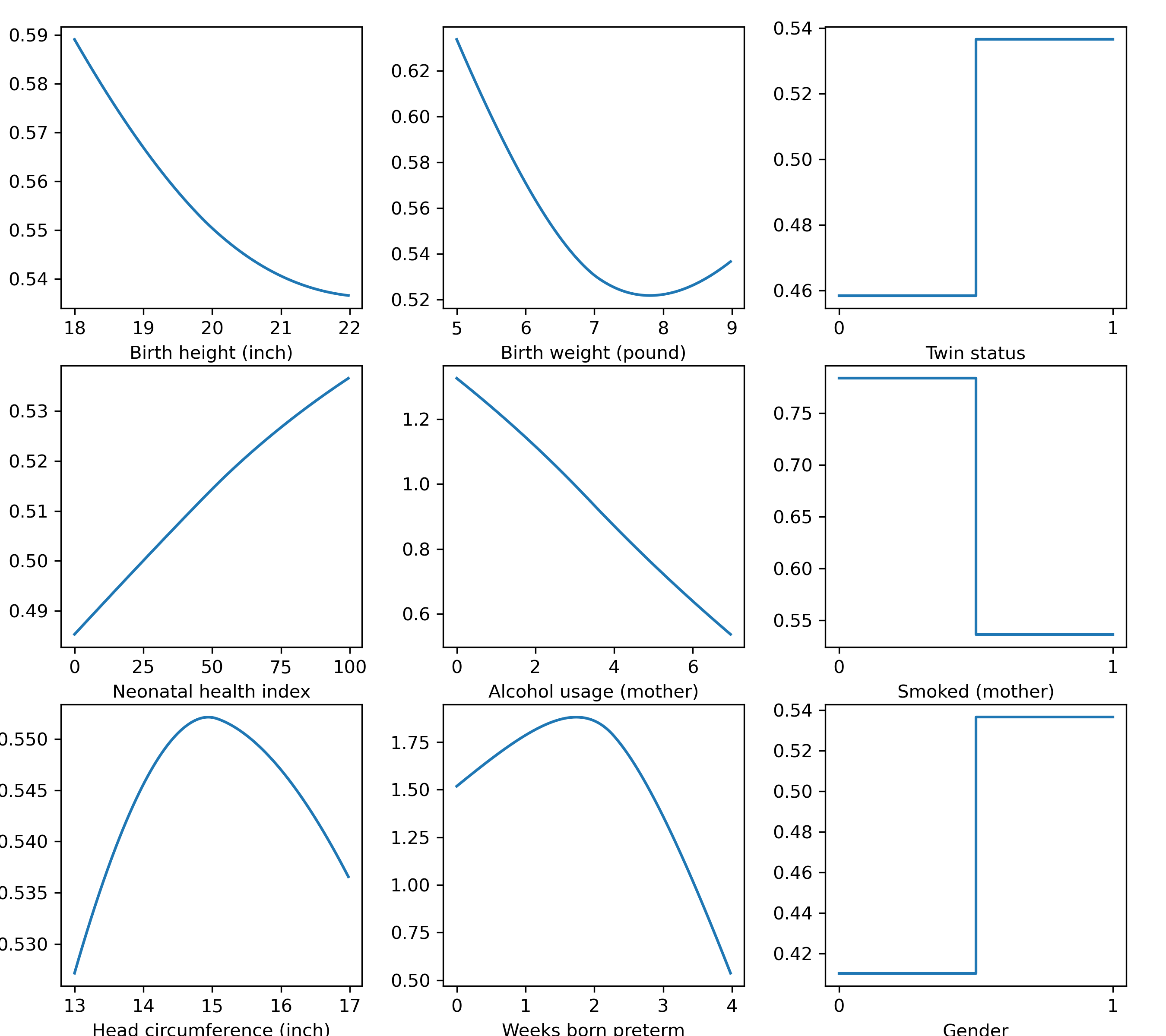}
  \end{center}
  \caption{Impact of selected features on estimation of treatment effect on cognitive test scores in IHDP using NAM models. The $y-$axis represents the average treatment effect that attributed to each selected feature. (Gender: 0-male, 1-female) }
 \label{fig:interpret}
\end{wrapfigure}

We analyze the IHDP data using our proposed model. Figure \ref{fig:interpret} presents the aforementioned plots  to demonstrate how to interpret the results from NAMs. First, by removing the mean score for each graph (i.e., each feature) across the entire training dataset, we set the average score for each plot in Figure \ref{fig:interpret} (i.e., each feature) to zero. A single bias term is then added to the model to make individual shape functions identifiable, so that the average prediction across all data points matches the observed baseline. This makes the interpretation of each term much easier.

In Figure \ref{fig:interpret}, we plot each learned shape function $q_j (x_{ij}, 1; \theta_j) - q_j (x_{ij}, 0; \theta_j)$ against $x_{ij}$ for an ensemble of neural additive model using blue line. This enables us to determine when the ensemble models learned the same shape function and when they diverged. By plotting each shape function, we are able to interpret how model estimates treatment effect based on the contributions from each observed covariate such as birth weight, birth height and neonatal health index.

\subsection{Treatment Effect Estimation} \label{subsec: models}

We further compare our proposed approach (with/without weighted regularization process) with several current state-of-the-art methods for estimating ATE: 1) BART, the Bayesian Additive Regression Tree (BART) by \citet{chipman2010bart}; 2) CBPS, covariate balancing propensity score (CBPS) by \citet{imai2014covariate};  3) CF, the causal forest model by \citet{dowhy} motivated by generalized random forest from \citet{athey2019generalized}; 4) SITE, a local similarity
preserved ITE estimation by \citet{yao2018representation};  5) GANITE, generative adversarial nets for the estimation of ITE by \citet{yoon2018ganite};  6) Perfect Matching, a method for training neural networks for counterfactual inference by \citet{schwab2018perfect};  7)  DRAGONNET, the deep neural network structure model by \citet{shi2019adapting} with target regularization;  8) TARNET, the deep neural network model by \citet{shalit2017estimating};  9) FlexTENet, flexible approach for CATE estimation by \citet{curth2021inductive}.



\underline{\textbf{Results}} Our experiment results in analyses of the IHDP and ACIC datasets are summarized in Table~\ref{table:1}. For IHDP, we randomly split the data into training set, testing set and validating set with the equal proportion (i.e. $1/3$ for each). For ACIC, we use all the data as training set and estimate the ATE. We report the mean absolute error $|\hat{\tau} - \tau|$ in Table~\ref{table:1}. Our results show that our proposed model with weighted regularization outperforms all the existing models  included as it has the lowest bias of estimation for ATE in both the IHDP and ACIC 2018 datasets. Actually,  even our base model~\eqref{equ: obj} without weighted regularization yields nearly close to the best performance among all the other methods, suggesting the advantage of our new objective function in Section~\ref{sec:objective}.  In addition, our results also shows that the weighted regularization in the refined model~\eqref{equ:modified_objective} improves the performance of our base model~\eqref{equ: obj}, and the improvement is more pronounced for ACIC. Moreover, in comparison to fully-connected neural network models, NAMs are inherently interpretable while suffering a little loss in estimation accuracy when applied to both IHDP and ACIC, demonstrating the trade-off between interpreability and accuracy. Our model with regularization and NAMs outperforms all the other models with NAMs, which is particularly pronounced in ACIC.

\begin{table}[!tb]
\caption{Performance on estimation of ATE in terms of the mean absolute error $|\hat{\tau} - \tau|$ across 200 datasets.  For all deep neural network models, there are 3 hidden layers with 100 hidden units; the hyperparameter $\alpha,\beta$ are 0.05 and 1. The deep neural network models are trained using stochastic gradient descent with momentum. The causal forest is implemented using the package from \citet{dowhy}; BART is implemented using the package from \citet{package}. For comparisons, the deep neural network models used in the methods denoted by $^*$ are changed to NAMs.  }
\label{table:1}
\vskip 0.15in
\begin{center}
\begin{small}
\begin{sc}
\begin{tabular}{lccccr}
\toprule
Method & IHDP & ACIC   \\
\midrule
\textbf{CLASSICAL ML MODELS } \\
\quad BART \citep{chipman2010bart}         & $0.95 \pm .04$  & $1.32 \pm .04 $\\
\quad CBPS \citep{imai2014covariate}                 & $0.84 \pm .03$ & $1.25 \pm .03$    \\
\quad CF \citep{dowhy} & $0.79 \pm .03$ & $1.15 \pm .03$ \\

\textbf{DEEP LEARNING MODELS } \\
\quad SITE \citep{yao2018representation}                                    & $0.32 \pm .02$ & $0.51 \pm .02$ \\
\quad GANITE \citep{yoon2018ganite}                                           & $0.51 \pm .03$ & $0.81 \pm .03$ \\
\quad Perfect Match \citep{schwab2018perfect}                                           & $0.40 \pm .02$ & $0.71 \pm .02$ \\
\quad DRAGONNET \citep{shi2019adapting}               & $0.20 \pm .01$ & $0.35 \pm .01$\\
\quad DRAGONNET$^*$ \citep{shi2019adapting}               & $0.31 \pm .01$ & $0.56 \pm .01$\\
\quad TARNET$^*$ \citep{shalit2017estimating}                    & $0.46 \pm .02$ & $0.77 \pm .02$\\
\quad FlexTENet$^*$ \citep{curth2021inductive}                                           & $0.49 \pm .03$ & $0.79 \pm .03$ \\
\quad Our model$^*$ (without regularization)                    & $\mathbf{0.39 \pm .02} $ & $\mathbf{0.53 \pm .02}$ \\
\quad Our model$^*$ (with regularization)                    & $\mathbf{0.29 \pm .02} $ & $\mathbf{0.46 \pm .02} $\\
\quad Our model (without regularization)                    & $\mathbf{0.27 \pm .02} $ & $\mathbf{0.44 \pm .02}$ \\
\quad Our model (with regularization)                    & $\mathbf{0.18 \pm .02} $ & $\mathbf{0.32 \pm .02} $\\
\bottomrule
\end{tabular}
\end{sc}
\end{small}
\end{center}
\vskip -0.1in
\end{table}

\section{Discussion and Conclusion} \label{sec: discussion and conclusion}

In this paper, we propose a new covariate-balancing-aware interpretable deep learning approach for estimating ATE. By leveraging appealing properties of the weighted energy distance, we derive a generalization error bound for the bias of estimation of ATE that is sharper than the existing work. Our theoretical analysis motivated the development of our new models including a new objective function and a energy distance propensity score weighted regularization. In addition,to balance trade-off between the predictive performance and interpreability, neural additive models can be used for predicting potential outcomes in our approach. Our numerical results including ablation experiments demonstrate the advantages of the proposed loss function and the weighted regularization. 

There are some promising future extensions of our work. For example, a possible extension is to incorporate instrumental variables for the explanation for hidden confounders. Recall our work is under ``strong ignorability'' assumption. \citet{hartford2017deep}  equipped their model with instrumental variables for counterfactual prediction under additive hidden bias model assumption. However, such assumption is somehow not applicable in most real cases in comparison to traditional instrumental variables assumptions such as mean independence assumption or monotonicity assumption. Conducting theoretical analysis for the bias of interval estimation of ATE under instrumental variable framework will be our next goal.   


\bibliographystyle{apalike}
\bibliography{main.bib}

\begin{thebibliography}{}

\bibitem[Agarwal et~al., 2020]{agarwal2020neural}
Agarwal, R., Frosst, N., Zhang, X., Caruana, R., and Hinton, G.~E. (2020).
\newblock Neural additive models: Interpretable machine learning with neural
  nets.
\newblock {\em arXiv preprint arXiv:2004.13912}.

\bibitem[Alaa and van~der Schaar, 2017]{alaa2017bayesian}
Alaa, A.~M. and van~der Schaar, M. (2017).
\newblock Bayesian inference of individualized treatment effects using
  multi-task gaussian processes.
\newblock {\em arXiv preprint arXiv:1704.02801}.

\bibitem[Alaa et~al., 2017]{alaa2017deep}
Alaa, A.~M., Weisz, M., and Van Der~Schaar, M. (2017).
\newblock Deep counterfactual networks with propensity-dropout.
\newblock {\em arXiv preprint arXiv:1706.05966}.

\bibitem[Athey et~al., 2019]{athey2019generalized}
Athey, S., Tibshirani, J., and Wager, S. (2019).
\newblock Generalized random forests.
\newblock {\em The Annals of Statistics}, 47(2):1148--1178.

\bibitem[Bica et~al., 2020]{bica2020estimating}
Bica, I., Jordon, J., and van~der Schaar, M. (2020).
\newblock Estimating the effects of continuous-valued interventions using
  generative adversarial networks.
\newblock {\em arXiv preprint arXiv:2002.12326}.

\bibitem[Bu et~al., 2021]{bu2021dynamical}
Bu, Z., Xu, S., and Chen, K. (2021).
\newblock A dynamical view on optimization algorithms of overparameterized
  neural networks.
\newblock In {\em International Conference on Artificial Intelligence and
  Statistics}, pages 3187--3195. PMLR.

\bibitem[Chernozhukov et~al., 2017]{chernozhukov2017double}
Chernozhukov, V., Chetverikov, D., Demirer, M., Duflo, E., Hansen, C., and
  Newey, W. (2017).
\newblock Double/debiased/neyman machine learning of treatment effects.
\newblock {\em American Economic Review}, 107(5):261--65.

\bibitem[Chipman et~al., 2010]{chipman2010bart}
Chipman, H.~A., George, E.~I., and McCulloch, R.~E. (2010).
\newblock Bart: Bayesian additive regression trees.
\newblock {\em The Annals of Applied Statistics}, 4(1):266--298.

\bibitem[Cox, 1958]{cox1958planning}
Cox, D.~R. (1958).
\newblock {\em Planning of experiments.}
\newblock Wiley.

\bibitem[Cram{\'e}r, 1928]{cramer1928composition}
Cram{\'e}r, H. (1928).
\newblock {\em On the composition of elementary errors: Statistical
  applications}.
\newblock Almqvist and Wiksell.

\bibitem[Curth and van~der Schaar, 2021]{curth2021inductive}
Curth, A. and van~der Schaar, M. (2021).
\newblock On inductive biases for heterogeneous treatment effect estimation.
\newblock {\em Advances in Neural Information Processing Systems}, 34.

\bibitem[Du et~al., 2018]{du2018gradient}
Du, S.~S., Zhai, X., Poczos, B., and Singh, A. (2018).
\newblock Gradient descent provably optimizes over-parameterized neural
  networks.
\newblock {\em arXiv preprint arXiv:1810.02054}.

\bibitem[Farrell et~al., 2018]{farrell2018deep}
Farrell, M.~H., Liang, T., and Misra, S. (2018).
\newblock Deep neural networks for estimation and inference: Application to
  causal effects and other semiparametric estimands.
\newblock {\em arXiv preprint arXiv:1809.09953}.

\bibitem[Hartford et~al., 2017]{hartford2017deep}
Hartford, J., Lewis, G., Leyton-Brown, K., and Taddy, M. (2017).
\newblock Deep iv: A flexible approach for counterfactual prediction.
\newblock In {\em International Conference on Machine Learning}, pages
  1414--1423. PMLR.

\bibitem[Hill, 2011]{hill2011bayesian}
Hill, J.~L. (2011).
\newblock Bayesian nonparametric modeling for causal inference.
\newblock {\em Journal of Computational and Graphical Statistics},
  20(1):217--240.

\bibitem[Huling and Mak, 2020]{huling2020energy}
Huling, J.~D. and Mak, S. (2020).
\newblock Energy balancing of covariate distributions.
\newblock {\em arXiv preprint arXiv:2004.13962}.

\bibitem[Imai and Ratkovic, 2014]{imai2014covariate}
Imai, K. and Ratkovic, M. (2014).
\newblock Covariate balancing propensity score.
\newblock {\em Journal of the Royal Statistical Society: Series B (Statistical
  Methodology)}, 76(1):243--263.

\bibitem[Johansson et~al., 2016]{johansson2016learning}
Johansson, F., Shalit, U., and Sontag, D. (2016).
\newblock Learning representations for counterfactual inference.
\newblock In {\em International conference on machine learning}, pages
  3020--3029. PMLR.

\bibitem[Johansson et~al., 2020]{johansson2020generalization}
Johansson, F.~D., Shalit, U., Kallus, N., and Sontag, D. (2020).
\newblock Generalization bounds and representation learning for estimation of
  potential outcomes and causal effects.
\newblock {\em arXiv preprint arXiv:2001.07426}.

\bibitem[Kaddour et~al., 2021]{kaddour2021causal}
Kaddour, J., Zhu, Y., Liu, Q., Kusner, M.~J., and Silva, R. (2021).
\newblock Causal effect inference for structured treatments.
\newblock {\em Advances in Neural Information Processing Systems}, 34.

\bibitem[Kennedy, 2016]{kennedy2016semiparametric}
Kennedy, E.~H. (2016).
\newblock Semiparametric theory and empirical processes in causal inference.
\newblock In {\em Statistical causal inferences and their applications in
  public health research}, pages 141--167. Springer.

\bibitem[Louizos et~al., 2017]{louizos2017causal}
Louizos, C., Shalit, U., Mooij, J., Sontag, D., Zemel, R., and Welling, M.
  (2017).
\newblock Causal effect inference with deep latent-variable models.
\newblock {\em arXiv preprint arXiv:1705.08821}.

\bibitem[Mak and Joseph, 2018]{mak2018support}
Mak, S. and Joseph, V.~R. (2018).
\newblock Support points.
\newblock {\em The Annals of Statistics}, 46(6A):2562--2592.

\bibitem[Mathews and Atkinson, 1998]{mathews1998infant}
Mathews, T. and Atkinson, J. (1998).
\newblock Infant mortality statistics from the linked birth/infant death data
  set—1995 period data.
\newblock {\em Monthly Vital Statistics Reports}, 46(6).

\bibitem[Rosenbaum et~al., 2010]{rosenbaum2010design}
Rosenbaum, P.~R., Rosenbaum, P., and Briskman (2010).
\newblock {\em Design of observational studies}, volume~10.
\newblock Springer.

\bibitem[Rosenbaum and Rubin, 1983]{rosenbaum1983central}
Rosenbaum, P.~R. and Rubin, D.~B. (1983).
\newblock The central role of the propensity score in observational studies for
  causal effects.
\newblock {\em Biometrika}, 70(1):41--55.

\bibitem[Rubin, 1974]{rubin1974estimating}
Rubin, D.~B. (1974).
\newblock Estimating causal effects of treatments in randomized and
  nonrandomized studies.
\newblock {\em Journal of educational Psychology}, 66(5):688.

\bibitem[Schwab et~al., 2018]{schwab2018perfect}
Schwab, P., Linhardt, L., and Karlen, W. (2018).
\newblock Perfect match: A simple method for learning representations for
  counterfactual inference with neural networks.
\newblock {\em arXiv preprint arXiv:1810.00656}.

\bibitem[Shalit et~al., 2017]{shalit2017estimating}
Shalit, U., Johansson, F.~D., and Sontag, D. (2017).
\newblock Estimating individual treatment effect: generalization bounds and
  algorithms.
\newblock In {\em International Conference on Machine Learning}, pages
  3076--3085. PMLR.

\bibitem[Sharma et~al., 2019]{dowhy}
Sharma, A., Kiciman, E., et~al. (2019).
\newblock Do{W}hy: {A Python package for causal inference}.
\newblock https://github.com/microsoft/dowhy.

\bibitem[Shi et~al., 2019]{shi2019adapting}
Shi, C., Blei, D.~M., and Veitch, V. (2019).
\newblock Adapting neural networks for the estimation of treatment effects.
\newblock {\em arXiv preprint arXiv:1906.02120}.

\bibitem[Sparapani et~al., 2021]{package}
Sparapani, R., Spanbauer, C., and McCulloch, R. (2021).
\newblock Nonparametric machine learning and efficient computation with
  {B}ayesian additive regression trees: The {BART} {R} package.
\newblock {\em Journal of Statistical Software}, 97(1):1--66.

\bibitem[Splawa-Neyman et~al., 1990]{splawa1990application}
Splawa-Neyman, J., Dabrowska, D.~M., and Speed, T. (1990).
\newblock On the application of probability theory to agricultural experiments.
  essay on principles. section 9.
\newblock {\em Statistical Science}, pages 465--472.

\bibitem[Sz{\'e}kely, 2003]{szekely2003statistics}
Sz{\'e}kely, G.~J. (2003).
\newblock E-statistics: The energy of statistical samples.
\newblock {\em Bowling Green State University, Department of Mathematics and
  Statistics Technical Report}, 3(05):1--18.

\bibitem[Van~der Laan and Rose, 2011]{van2011targeted}
Van~der Laan, M.~J. and Rose, S. (2011).
\newblock {\em Targeted learning: causal inference for observational and
  experimental data}.
\newblock Springer Science \& Business Media.

\bibitem[Yao et~al., 2018]{yao2018representation}
Yao, L., Li, S., Li, Y., Huai, M., Gao, J., and Zhang, A. (2018).
\newblock Representation learning for treatment effect estimation from
  observational data.
\newblock {\em Advances in Neural Information Processing Systems}, 31.

\bibitem[Yoon et~al., 2018]{yoon2018ganite}
Yoon, J., Jordon, J., and Van Der~Schaar, M. (2018).
\newblock Ganite: Estimation of individualized treatment effects using
  generative adversarial nets.
\newblock In {\em International Conference on Learning Representations}.

\end{thebibliography}

\newpage


\newpage

\section{Supplementary Materials for ``Covariate-Balancing-Aware Interpretable Deep Learning Models for Treatment Effect Estimation"} \label{supp}

\subsection{Covariate Balancing Propensity Score with Deep Neural Network}

Covariate balancing propensity score (CBPS) can be viewed as an adjustment for propensity score model based on the distributions of treated and control units. To equip deep neural network models with covariate balancing propensity score, let $Q^{\text{NN}}(\mathbf{X}_i,A_i;\theta)$ denote the predicted outcomes using neural network, and $g(\mathbf{X}_i;\theta)$ be the estimated propensity score, then we train our neural network by minimizing the objective function

\begin{align*}
            & \hat{\theta}  = \argmin_{\theta}\hat{R}(\mathbf{X}, \mathbf{A};\theta) ,\quad \text{where}\\
 \hat{R}(\mathbf{X}, \mathbf{A};\theta)   &= \frac{1}{n}\sum_{i=1}^n(Q^{\text{NN}}(\mathbf{X}_i,A_i;\theta)-Y_{i})^{2} + \bar h_\theta (\mathbf{A},\mathbf{X})^\top \Sigma_\theta (\mathbf{A}, \mathbf{X})^{-1} \bar h_\theta (\mathbf{A},\mathbf{X}) \\
 \bar h_\theta (\mathbf{A},\mathbf{X}) &= \frac{1}{n} \sum_{i=1}^n h_\theta (A_i,\mathbf{X}_i) \\
 h_\theta (A_i,\mathbf{X}_i) &= \binom{\frac{\partial g }{\partial \theta} ( \mathbf{X}_i; \theta) }{\frac{A_i - g(\mathbf{X}_i;\theta)}{g(\mathbf{X}_i;\theta) (1 - g(\mathbf{X}_i;\theta))} \mathbf{X}_i  } \\
 \Sigma_\theta (\mathbf{A}, \mathbf{X}) &= \frac{1}{n} \sum_{i=1}^N \mathbb{E}\left( g(\mathbf{X}_i;\theta) g(\mathbf{X}_i;\theta)^\top | \mathbf{X}_i        \right).
\end{align*}

It is motivated by the \emph{Method of Moments}. We match the first moment of the distributions of treated units and control units. Details of CBPS can be found in \citet{imai2014covariate}.

\subsection{Proof of Lemma 1}
\begin{proof}
According to the concept of expected factual and counterfactual losses in Definition \ref{def: f and cf loss}, the concepts of expected factual (counterfactual) treated and control losses are immediately followed: 
\begin{align*}
    &R_F (\hat{\mu}_1) = \int_{\mathbb{R}^p} l_{\hat{\mu}_1} (\mathbf{x}) \, d F_1 (\mathbf{x})\\
    &R_F (\hat{\mu}_0) = \int_{\mathbb{R}^p} l_{\hat{\mu}_0} (\mathbf{x}) \, d F_0 (\mathbf{x})\\
    &R_{CF} (\hat{\mu}_1) = \int_{\mathbb{R}^p} l_{\hat{\mu}_1} (\mathbf{x}) \, d F_0 (\mathbf{x})\\
    &R_{CF} (\hat{\mu}_0) = \int_{\mathbb{R}^p} l_{\hat{\mu}_0} (\mathbf{x}) \, d F_1 (\mathbf{x})
\end{align*}
The connections between expected factual (counterfactual) treated losses and expected factual (counterfactual)  control losses are: 
\begin{align*}
    &R_{F} (\hat{\mu}_a) = P_0 \, R_{F} (\hat{\mu}_0) + P_1 \, R_{F} (\hat{\mu}_1)\\
    &R_{CF} (\hat{\mu}_a) = P_1 \, R_{CF} (\hat{\mu}_0) + P_0 \, R_{CF} (\hat{\mu}_1)
\end{align*}
where $P_1 \equiv \mathbb{P}(A = 1), P_0 \equiv \mathbb{P}(A = 0)$. 

Then if we do subtraction between $R_{CF} (\hat{\mu}_a)$ and $R_{F} (\hat{\mu}_a) $, then
\begin{align*}
        \quad & R_{CF} (\hat{\mu}_a) - R_{F} (\hat{\mu}_a) = \left[P_0 \, R_{CF} (\hat{\mu}_1) + P_1 \, R_{CF} (\hat{\mu}_0)\right]- \left[P_0 \, R_F (\hat{\mu}_1)  + P_1 \, R_{F} (\hat{\mu}_0)\right]\\
        =& P_0 \left[R_{CF} (\hat{\mu}_1) - R_F (\hat{\mu}_1)\right] + P_1 \left[R_{CF} (\hat{\mu}_0) - P_1 R_{F} (\hat{\mu}_0)\right]\\
        =& P_0 \left[ \int_{\mathbb{R}^p} l_{\hat{\mu}_1} (\mathbf{x}) \, d F_0 (\mathbf{x}) - \int_{\mathbb{R}^p} l_{\hat{\mu}_1} (\mathbf{x}) \, d F_1 (\mathbf{x}) \right] + P_1 \left[ \int_{\mathbb{R}^p} l_{\hat{\mu}_0} (\mathbf{x}) \, d F_1 (\mathbf{x}) - \int_{\mathbb{R}^p} l_{\hat{\mu}_0} (\mathbf{x}) \, d F_0 (\mathbf{x}) \right]\\
        =& -P_0 \int_{\mathbb{R}^p} l_{\hat{\mu}_1} (\mathbf{x}) \, d[F_1 - F_0] (\mathbf{x}) + P_1 \int_{\mathbb{R}^p} l_{\hat{\mu}_0} (\mathbf{x}) \, d[F_1 - F_0] (\mathbf{x})\\
        =& \int_{\mathbb{R}^p} [-P_0 l_{\hat{\mu}_1} + P_1 l_{\hat{\mu}_0}] (\mathbf{x}) \, d[F_1 - F_0] (\mathbf{x})\\
        =& \int_{\mathbb{R}^p} [-P_0 l_{\hat{\mu}_1} + P_1 l_{\hat{\mu}_0}] (\mathbf{x}) \, d[F - F_0] (\mathbf{x})- \int_{\mathbb{R}^p} [-P_0 l_{\hat{\mu}_1} + P_1 l_{\hat{\mu}_0}] (\mathbf{x}) \, d[F - F_1] (\mathbf{x}).
\end{align*}
Let $g(\mathbf{x}) = [-P_0 l_{\hat{\mu}_1} + P_1 l_{\hat{\mu}_0}] (\mathbf{x})$, accoring to Theorem 4 (Koksma-Hlawka) by \citet{mak2018support}:
\begin{align*}
            \quad & \left|\int_{\mathbb{R}^p} [-P_0 l_{\hat{\mu}_1} + P_1 l_{\hat{\mu}_0}] (\mathbf{x}) \, d[F - F_0] (\mathbf{x})\right| = \left|\int_{\mathbb{R}^p} g (\mathbf{x}) \, d[F - F_0] (\mathbf{x})\right|\\
        \leq & \|\partial^p g / \partial \mathbf{x}^p  \|_2 \left(\int_{\mathbb{R}^p}(F(\mathbf{x}) - F_0(\mathbf{x}))^2 d\mathbf{x}\right)^{1/2} =  \frac{\|\partial^p g / \partial \mathbf{x}^p  \|_2}{\sqrt{2}} \sqrt{\mathcal{E}(F, F_0)} 
        =  C_{l_{\hat{\mu}_a} } \sqrt{\mathcal{E}(F, F_0)}
\end{align*}
Similarly, 
\begin{align*}
            \quad & \left|\int_{\mathbb{R}^p} [-P_0 l_{\hat{\mu}_1} + P_1 l_{\hat{\mu}_0}] (\mathbf{x}) \, d[F - F_1] (\mathbf{x})\right|
        \leq \frac{\|\partial^p g / \partial \mathbf{x}^p  \|_2}{\sqrt{2}} \sqrt{\mathcal{E}(F, F_1)} 
         =  C_{l_{\hat{\mu}_a} } \sqrt{\mathcal{E}(F, F_1)}
\end{align*}
where $C_{l_{\hat{\mu}_a} } = \|\partial^p [-P_0 l_{\hat{\mu}_1} + P_1 l_{\hat{\mu}_0} ](\mathbf{x}) /\partial \mathbf{x}^p  )  \|_2/\sqrt{2}$ is a constant.
According to the triangle inequality, 
\begin{align*}
            \quad & R_{CF} (\hat{\mu}_a) - \left[P_0 \, R_F (\hat{\mu}_1)  + P_1 \, R_{F} (\hat{\mu}_0)\right]
        \leq \left| R_{CF} (\hat{\mu}_a) - \left[P_0 \, R_F (\hat{\mu}_1)  + P_1 \, R_{F} (\hat{\mu}_0)\right] \right|\\
        =& \Big| \int_{\mathbb{R}^p} [-P_0 l_{\hat{\mu}_1} + P_1 l_{\hat{\mu}_0}] (\mathbf{x}) \, d[F - F_0] (\mathbf{x}) - \int_{\mathbb{R}^p} [-P_0 l_{\hat{\mu}_1} + P_1 l_{\hat{\mu}_0}] (\mathbf{x}) \, d[F - F_1] (\mathbf{x}) \Big|\\
        \leq& \left|\int_{\mathbb{R}^p} [-P_0 l_{\hat{\mu}_1} + P_1 l_{\hat{\mu}_0}] (\mathbf{x}) \, d[F - F_0] (\mathbf{x})\right| + \left|\int_{\mathbb{R}^p} [-P_0 l_{\hat{\mu}_1} + P_1 l_{\hat{\mu}_0}] (\mathbf{x}) \, d[F - F_1] (\mathbf{x})\right|\\
        \leq& C_{l_{\hat{\mu}_a} } \left( \sqrt{\mathcal{E}(F, F_0)} +  \sqrt{\mathcal{E}(F, F_1)} \right)
\end{align*}

To prove the result when $n \to \infty$ with weights, similarly, the weighted expected factual (counterfactual) treated and control losses and their relations are given as: 
\begin{align*}
            &R_F (\hat{\mu}_1, \mathbf{w}) = \int_{\mathbb{R}^p} l_{\hat{\mu}_1} (\mathbf{x}) \, d F_{n, 1, \mathbf{w}} (\mathbf{x}) \\
        &R_F (\hat{\mu}_0, \mathbf{w}) = \int_{\mathbb{R}^p} l_{\hat{\mu}_0} (\mathbf{x}) \, d F_{n, 0, \mathbf{w}} (\mathbf{x}) \\
        &R_{CF} (\hat{\mu}_1, \mathbf{w}) = \int_{\mathbb{R}^p} l_{\hat{\mu}_1} (\mathbf{x}) d F_{n, 0, \mathbf{w}} (\mathbf{x})\\
        &R_{CF} (\hat{\mu}_0, \mathbf{w}) = \int_{\mathbb{R}^p} l_{\hat{\mu}_0} (\mathbf{x}) d F_{n, 1, \mathbf{w}} (\mathbf{x})\\
        &R_F (\hat{\mu}_a, \mathbf{w}) = P_0 \, R_F (\hat{\mu}_0, \mathbf{w}) + P_1 \, R_F (\hat{\mu}_1, \mathbf{w})\\
        &R_{CF} (\hat{\mu}_a, \mathbf{w}) = P_1 \, R_{CF} (\hat{\mu}_0, \mathbf{w}) + P_0 \, R_{CF} (\hat{\mu}_1, \mathbf{w}).
\end{align*}
Then again, we can do subtraction between $R_{CF} (\hat{\mu}_a, \mathbf{w})$ and $R_{F} (\hat{\mu}_a, \mathbf{w})$: 
\begin{align*}
            \quad & R_{CF} (\hat{\mu}_a, \mathbf{w}) - R_{F} (\hat{\mu}_a, \mathbf{w})\\
        =& \int_{\mathbb{R}^p} [-P_0 l_{\hat{\mu}_1} + P_1 l_{\hat{\mu}_0}] (\mathbf{x}) \, d[F_{n} - F_{n, 0, \mathbf{w}}] (\mathbf{x}) - \int_{\mathbb{R}^p} [-P_0 l_{\hat{\mu}_1} + P_1 l_{\hat{\mu}_0}] (\mathbf{x}) \, d[F_{n} - F_{n, 1, \mathbf{w}}] (\mathbf{x}).
\end{align*}
 Then by Lemma 3.3 in \citet{huling2020energy}, 
\begin{align*}
            \quad &\left| \int_{\mathbb{R}^p} [-P_0 l_{\hat{\mu}_1} + P_1 l_{\hat{\mu}_0}] (\mathbf{x}) \, d[F_{n} - F_{n, 0, \mathbf{w}}] (\mathbf{x}) \right| = C_{l_{\hat{\mu}_a} } \sqrt{\mathcal{E}(F_{n}, F_{n, 0, \mathbf{w}})}
\end{align*}
and 
\begin{align*}
            \quad &\left| \int_{\mathbb{R}^p} [-P_0 l_{\hat{\mu}_1} + P_1 l_{\hat{\mu}_0}] (\mathbf{x}) \, d[F_{n} - F_{n, 1, \mathbf{w}}] (\mathbf{x}) \right| = C_{l_{\hat{\mu}_a} } \sqrt{\mathcal{E}(F_{n}, F_{n, 1, \mathbf{w}})}
\end{align*}
Finally, 
\begin{align*}
            \quad & R_{CF} (\hat{\mu}_a, \mathbf{w}) - R_{F} (\hat{\mu}_a, \mathbf{w})
        \leq \left| R_{CF} (\hat{\mu}_a, \mathbf{w}) - R_{CF} (\hat{\mu}_a, \mathbf{w}) - R_{F} (\hat{\mu}_a, \mathbf{w}) \right|\\
        =& \Big|\int_{\mathbb{R}^p} [-P_0 l_{\hat{\mu}_1} + P_1 l_{\hat{\mu}_0}] (\mathbf{x}) \, d[F_{n} - F_{n, 0, \mathbf{w}}] (\mathbf{x}) - \int_{\mathbb{R}^p} [-P_0 l_{\hat{\mu}_1} + P_1 l_{\hat{\mu}_0}] (\mathbf{x}) \, d[F_{n} - F_{n, 1, \mathbf{w}}] (\mathbf{x})\Big| \\
        \leq& \left| \int_{\mathbb{R}^p} [-P_0 l_{\hat{\mu}_1} + P_1 l_{\hat{\mu}_0}] (\mathbf{x}) \, d[F_{n} - F_{n, 0, \mathbf{w}}] (\mathbf{x}) \right| - \left| \int_{\mathbb{R}^p} [-P_0 l_{\hat{\mu}_1} + P_1 l_{\hat{\mu}_0}] (\mathbf{x}) \, d[F_{n} - F_{n, 1, \mathbf{w}}] (\mathbf{x}) \right|\\
        \leq& C_{l_{\hat{\mu}_a} } \left( \sqrt{\mathcal{E}(F_{n}, F_{n, 0, \mathbf{w}})} +  \sqrt{\mathcal{E}(F_{n}, F_{n, 1, \mathbf{w}})} \right)
\end{align*}
\end{proof}

\subsection{Proof of Theorem 1}

Before proving Theorem~\ref{thm: bound error ATE}, it will be helpful to state the following definition and claims. 

\begin{definition}
The expected variance of $Y(a)$ with respect to the distribution $p(\mathbf{x},a)$ is
\begin{align*}
     \sigma_{Y(a)}^2 (p(\mathbf{x},a)) = \int_{\mathbb{R}^p \times \mathcal{Y}} (Y(a) - \mu_a(\mathbf{x}))^2 p(Y(a)|\mathbf{x})  dY(a) dF_a (\mathbf{x})
\end{align*}
and we further define
\begin{align*}
    &\sigma_{Y(a)}^2 = \min \{\sigma_{Y(a)}^2 (p(\mathbf{x},a)),\sigma_{Y(a)}^2 (p(\mathbf{x},1-a))  \} \\
    &\sigma_{Y}^2 = \min \{\sigma_{Y(1)}^2,\sigma_{Y(0)}^2  \}.
\end{align*}
\end{definition}

\begin{claim} \label{claim:1}
Given all conditions of Lemma \ref{lemma: CF}, the following two equalities hold:
\begin{align*}
            \int_{\mathbb{R}^p \times \{0, 1\}} \lVert \hat{\mu}_a(\mathbf{x}) - \mu_a(\mathbf{x}) \rVert_2^2 \, dF_a(\mathbf{x}) \, da = R_{F}(\hat{\mu}_a) - \sigma_{Y(a)}^2 (p(\mathbf{x},a))
\end{align*}
and
\begin{align*}
            \int_{\mathbb{R}^p \times \{0, 1\}} \lVert \hat{\mu}_a(\mathbf{x}) - \mu_a(\mathbf{x}) \rVert_2^2 \, dF_{1-a}(\mathbf{x}) \, da = R_{CF}(\hat{\mu}_a) - \sigma_{Y(a)}^2 (p(\mathbf{x}, 1-a))
\end{align*}
\end{claim}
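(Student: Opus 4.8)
The plan is to establish both identities through a bias--variance decomposition of the squared loss, the key ingredient being that $\mu_a(\mathbf{x}) = \mathbb{E}(Y(a)\mid\mathbf{x})$ is precisely the conditional mean, so it is the $L_2$ projection that annihilates the cross term. I read the expected pointwise loss with the squared loss $L(y,y')=\lVert y-y'\rVert_2^2$ as comparing the realized potential outcome against the prediction, i.e.\ $l_{\hat\mu_a}(\mathbf{x}) = \int_{\mathcal{Y}} \lVert Y(a)-\hat\mu_a(\mathbf{x})\rVert_2^2\, p(Y(a)\mid\mathbf{x})\, dY(a)$; this is what lets the variance terms $\sigma_{Y(a)}^2$ surface on the right-hand side.

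First I would work pointwise in $\mathbf{x}$ (and $a$). Inserting $\pm\mu_a(\mathbf{x})$ and expanding the square gives
\begin{align*}
\lVert Y(a)-\hat\mu_a(\mathbf{x})\rVert_2^2 &= \lVert Y(a)-\mu_a(\mathbf{x})\rVert_2^2 + \lVert \mu_a(\mathbf{x})-\hat\mu_a(\mathbf{x})\rVert_2^2 \\
&\quad + 2\langle Y(a)-\mu_a(\mathbf{x}),\, \mu_a(\mathbf{x})-\hat\mu_a(\mathbf{x})\rangle .
\end{align*}
Integrating against $p(Y(a)\mid\mathbf{x})\,dY(a)$, the cross term vanishes because $\int (Y(a)-\mu_a(\mathbf{x}))\,p(Y(a)\mid\mathbf{x})\,dY(a) = \mathbb{E}(Y(a)\mid\mathbf{x})-\mu_a(\mathbf{x}) = 0$, and the middle term is constant in $Y(a)$. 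Hence
\begin{align*}
l_{\hat\mu_a}(\mathbf{x}) = \int_{\mathcal{Y}} \lVert Y(a)-\mu_a(\mathbf{x})\rVert_2^2\, p(Y(a)\mid\mathbf{x})\, dY(a) + \lVert \hat\mu_a(\mathbf{x})-\mu_a(\mathbf{x})\rVert_2^2 .
\end{align*}

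Finally I would integrate this identity. For the factual equality, integrating against $dF_a(\mathbf{x})\,da$ turns the left side into $R_F(\hat\mu_a)$, the first summand into $\sigma_{Y(a)}^2(p(\mathbf{x},a))$ by definition, and the second into $\int \lVert \hat\mu_a-\mu_a\rVert_2^2\,dF_a\,da$; rearranging yields the claim. The counterfactual equality follows by the identical computation with $F_{1-a}$ in place of $F_a$, so that the left side becomes $R_{CF}(\hat\mu_a)$ and the variance summand becomes $\sigma_{Y(a)}^2(p(\mathbf{x},1-a))$. I do not anticipate a real obstacle; the only point needing care is confirming the intended reading of the pointwise loss, since the verbatim definition writes $L(\mu_a(\mathbf{x}),\hat\mu_a(\mathbf{x}))$, which is constant in $Y(a)$ and would make the variance terms impossible to recover.
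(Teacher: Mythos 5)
Your proof is correct and takes essentially the same route as the paper's: a bias--variance decomposition obtained by inserting $\pm\mu_a(\mathbf{x})$ into the squared loss, using that $\mu_a(\mathbf{x})=\mathbb{E}(Y(a)\mid\mathbf{x})$ annihilates the cross term, and then integrating against $dF_a(\mathbf{x})\,da$ (resp.\ $dF_{1-a}(\mathbf{x})\,da$) to identify $R_F$ (resp.\ $R_{CF}$) and the variance terms $\sigma_{Y(a)}^2$. Your reading of the pointwise loss as comparing $Y(a)$ to $\hat{\mu}_a(\mathbf{x})$ is indeed the intended one---the paper's own proof uses exactly that interpretation, and you are right that the verbatim definition, being constant in $Y(a)$, could not produce the variance terms---and your explicit verification that the cross term vanishes is in fact more careful than the paper's proof, which drops it silently.
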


\begin{proof}[Proof of Claim 1]
\begin{align*}
             R_{F}(\hat{\mu}_a) =& \int_{\mathbb{R}^p \times \{0, 1\} \times \mathcal{Y}} \lVert \hat{\mu}_a(\mathbf{x}) - Y(a) \rVert_2^2 \, p(Y(a)|\mathbf{x}) \, dF_{1-a}(\mathbf{x}) \, dY(a) \, da\\
        =& \int_{\mathbb{R}^p \times \{0, 1\} \times \mathcal{Y}} \lVert \hat{\mu}_a(\mathbf{x}) - \mu_a(\mathbf{x}) \rVert_2^2 \, p(Y(a)|\mathbf{x}) \, dF_{1-a}(\mathbf{x}) \, dY(a) \, da\\
        &+ \int_{\mathbb{R}^p \times \{0, 1\} \times \mathcal{Y}} \lVert \mu_a(\mathbf{x}) - Y(a) \rVert_2^2 \, p(Y(a)|\mathbf{x}) \, dF_{1-a}(\mathbf{x}) \, dY(a) \, da\\
        =& \int_{\mathbb{R}^p \times \{0, 1\} } \lVert \hat{\mu}_a(\mathbf{x}) - \mu_a(\mathbf{x}) \rVert_2^2 \, dF_{1-a}(\mathbf{x}) \, da + \sigma_{Y(1)}^2 (p(\mathbf{x},a)) + \sigma_{Y(0)}^2 (p(\mathbf{x},a))\\
        =& \int_{\mathbb{R}^p \times \{0, 1\} } \lVert \hat{\mu}_a(\mathbf{x}) - \mu_a(\mathbf{x}) \rVert_2^2 \, p(Y(a)|\mathbf{x}) \, dF_{1-a}(\mathbf{x}) \, da + \sigma_{Y(a)}^2 (p(\mathbf{x},a))
\end{align*}
Therefore, 
\begin{align*}
            \int_{\mathbb{R}^p \times \{0, 1\}} \lVert \hat{\mu}_a(\mathbf{x}) - \mu_a(\mathbf{x}) \rVert_2^2 \, dF_a(\mathbf{x}) \, da = R_{F}(\hat{\mu}_a) - \sigma_{Y(a)}^2 (p(\mathbf{x}, a))
\end{align*}
Similarly, 
\begin{align*}
            \int_{\mathbb{R}^p \times \{0, 1\}} \lVert \hat{\mu}_a(\mathbf{x}) - \mu_a(\mathbf{x}) \rVert_2^2 \, dF_{1-a}(\mathbf{x}) \, da = R_{CF}(\hat{\mu}_a) - \sigma_{Y(a)}^2 (p(\mathbf{x}, 1-a))
\end{align*}
\end{proof}

\begin{claim} \label{claim:2}
Given all conditions of Lemma \ref{lemma: CF}, the following inequality also holds:
\begin{align*}
    R_{ATE}(\hat{\mu}_a) \leq 2 \left( R_F(\hat{\mu}_a) + R_{CF}(\hat{\mu}_a) - 2 \sigma_Y^2 \right)
\end{align*}
\end{claim}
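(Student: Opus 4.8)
The plan is to reduce the ATE error to a pointwise squared error in the two shape functions and then reabsorb everything into the factual and counterfactual risks through Claim~\ref{claim:1}. First I would write
\begin{align*}
\hat{\phi}(\mathbf{x}) - \phi(\mathbf{x}) = \big(\hat{\mu}_1(\mathbf{x}) - \mu_1(\mathbf{x})\big) - \big(\hat{\mu}_0(\mathbf{x}) - \mu_0(\mathbf{x})\big),
\end{align*}
abbreviate $e_a(\mathbf{x}) := \hat{\mu}_a(\mathbf{x}) - \mu_a(\mathbf{x})$, and note that the integrand of $R_{ATE}$ is exactly $(e_1(\mathbf{x}) - e_0(\mathbf{x}))^2$. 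Applying the elementary bound $(s-t)^2 \le 2(s^2+t^2)$ pointwise and integrating against $F$ gives
\begin{align*}
R_{ATE}(\hat{\mu}_a) \le 2\int_{\mathbb{R}^p} \big( e_1(\mathbf{x})^2 + e_0(\mathbf{x})^2 \big)\, dF(\mathbf{x}).
\end{align*}

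Next I would expand the population law as $F = P_1 F_1 + P_0 F_0$, as in Section~\ref{sec: prlim}, and split the right-hand side into the four integrals $\int e_a^2\, dF_{a'}$ with $a,a' \in \{0,1\}$. The matched terms ($a'=a$, i.e. $e_1^2$ against $F_1$ and $e_0^2$ against $F_0$) recombine into the pure-prediction-error part of $R_F(\hat{\mu}_a)$, while the mismatched terms ($a'=1-a$) recombine into the pure-prediction-error part of $R_{CF}(\hat{\mu}_a)$, using the group-probability weighting already built into the factual and counterfactual risks of Definition~\ref{def: f and cf loss}. This is exactly where Claim~\ref{claim:1} is used: its two identities say that these matched and mismatched error integrals equal $R_F(\hat{\mu}_a) - \sigma_{Y(a)}^2(p(\mathbf{x},a))$ and $R_{CF}(\hat{\mu}_a) - \sigma_{Y(a)}^2(p(\mathbf{x},1-a))$, respectively. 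Substituting yields
\begin{align*}
R_{ATE}(\hat{\mu}_a) \le 2\Big( R_F(\hat{\mu}_a) + R_{CF}(\hat{\mu}_a) - \sigma_{Y(a)}^2(p(\mathbf{x},a)) - \sigma_{Y(a)}^2(p(\mathbf{x},1-a)) \Big).
\end{align*}

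To finish, I would discard only as much of the (negative, hence favorable) variance as the stated constant requires. Since $\sigma_Y^2 = \min\{\sigma_{Y(1)}^2, \sigma_{Y(0)}^2\}$ with each $\sigma_{Y(a)}^2$ itself a minimum of the two conditional variances, $\sigma_Y^2$ is a lower bound for every variance term appearing above; hence the total subtracted variance is at least $2\sigma_Y^2$, and $-\sigma_{Y(a)}^2(p(\mathbf{x},a)) - \sigma_{Y(a)}^2(p(\mathbf{x},1-a)) \le -2\sigma_Y^2$, which gives the claim.

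The step I expect to be the main obstacle is the bookkeeping in the second paragraph: I must check that, under the group-probability weights encoded in $R_F$ and $R_{CF}$, the mixture decomposition of $\int (e_1^2 + e_0^2)\, dF$ recombines precisely into the two error integrals of Claim~\ref{claim:1} (or, if those weights were uniform rather than probability weights, that the recombination holds up to the harmless factors $P_0, P_1 \le 1$). Matching the $P_0$ and $P_1$ weights to the correct cross terms is the only genuinely delicate point; the remaining steps are an elementary inequality and a minimum bound.
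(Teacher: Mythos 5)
Your proposal is correct and follows essentially the same route as the paper's own proof: the elementary bound $(s-t)^2 \le 2(s^2+t^2)$ applied to $\hat{\mu}_1-\mu_1$ and $\hat{\mu}_0-\mu_0$, decomposition of $F$ into the group distributions, recombination of the matched and mismatched error integrals via Claim~\ref{claim:1}, and the final bound on the variance terms by $2\sigma_Y^2$. If anything, your handling of the mixture weights $P_0, P_1$ is more careful than the paper's, which silently writes $\int e_a^2\, dF$ as the unweighted sum $\int e_a^2\, dF_1 + \int e_a^2\, dF_0$ (an inequality in the favorable direction, not an equality), so your bookkeeping concern is resolved exactly as you anticipated.
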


\begin{proof}[Proof of Claim 2]
\begin{align*}
            R_{ATE}(\hat{\mu}_a) =& \int_{\mathbb{R}^p} \lVert \hat{\tau}(\mathbf{x}) - \tau(\mathbf{x}) \rVert_2^2 \, dF(\mathbf{x})
        = \int_{\mathbb{R}^p} \lVert \left(\hat{\mu}_1(\mathbf{x}) - \hat{\mu}_0(\mathbf{x}) \right) - \left(\mu_1(\mathbf{x}) - \mu_0(\mathbf{x}) \right) \rVert_2^2 \, dF(\mathbf{x})\\
        =& \int_{\mathbb{R}^p} \lVert \left(\hat{\mu}_1(\mathbf{x}) - \mu_1(\mathbf{x})  \right) - \left(\hat{\mu}_0(\mathbf{x}) - \mu_0(\mathbf{x}) \right) \rVert_2^2 \, dF(\mathbf{x})\\
        \leq& 2\int_{\mathbb{R}^p} \lVert \hat{\mu}_1(\mathbf{x}) - \mu_1(\mathbf{x}) \rVert_2^2 \, dF(\mathbf{x}) + 2\int_{\mathbb{R}^p} \lVert \hat{\mu}_0(\mathbf{x}) - \mu_0(\mathbf{x}) \rVert_2^2 \, dF(\mathbf{x})\\
        =& 2\int_{\mathbb{R}^p} \lVert \hat{\mu}_1(\mathbf{x}) - \mu_1(\mathbf{x}) \rVert_2^2 \, dF_1(\mathbf{x}) + 2\int_{\mathbb{R}^p} \lVert \hat{\mu}_1(\mathbf{x}) - \mu_1(\mathbf{x}) \rVert_2^2 \, dF_0(\mathbf{x})\\
        &+ 2\int_{\mathbb{R}^p} \lVert \hat{\mu}_0(\mathbf{x}) - \mu_0(\mathbf{x}) \rVert_2^2 \, dF_1(\mathbf{x}) + 2\int_{\mathbb{R}^p} \lVert \hat{\mu}_0(\mathbf{x}) - \mu_0(\mathbf{x}) \rVert_2^2 \, dF_0(\mathbf{x})\\
        =& 2\int_{\mathbb{R}^p \times \{0, 1\}} \lVert \hat{\mu}_a(\mathbf{x}) - \mu_a(\mathbf{x}) \rVert_2^2 \, dF_a(\mathbf{x}) \, da + 2\int_{\mathbb{R}^p \times \{0, 1\}} \lVert \hat{\mu}_a(\mathbf{x}) - \mu_a(\mathbf{x}) \rVert_2^2 \, dF_{1-a}(\mathbf{x}) \, da\\
        \leq& 2\left(R_{F}(\hat{\mu}_a) - \sigma_{Y}^2\right) + 2\left(R_{CF}(\hat{\mu}_a) - \sigma_{Y}^2\right)
\end{align*}
The last line is proved by Claim \ref{claim:1}. 
\end{proof}

\begin{proof}[Proof of Theorem 1]
According to Lemma 1 and its proof, we have
\begin{align*}
            R_{CF} (\hat{\mu}_a) \leq P_0 \, R_F (\hat{\mu}_1)  + P_1 \, R_{F} (\hat{\mu}_0) + C_{l_{\hat{\mu}_a} } \left( \sqrt{\mathcal{E}(F, F_0)} +  \sqrt{\mathcal{E}(F, F_1)} \right)
\end{align*}
and 
\begin{align*}
    R_F(\hat{\mu}_a) = P_0 \, R_F(\hat{\mu}_0) + P_1 \, R_F(\hat{\mu}_1)
\end{align*}
then
\begin{align*}
            R_F(\hat{\mu}_a) + R_{CF}(\hat{\mu}_a) \leq& P_0 \, R_F (\hat{\mu}_1)  + P_1 \, R_{F} (\hat{\mu}_0) + C_{l_{\hat{\mu}_a} } \left( \sqrt{\mathcal{E}(F, F_0)} +  \sqrt{\mathcal{E}(F, F_1)} \right) \\
            &+ P_0 \, R_F(\hat{\mu}_0) + P_1 \, R_F(\hat{\mu}_1)\\
        =& R_F (\hat{\mu}_1)  + R_{F} (\hat{\mu}_0) + C_{l_{\hat{\mu}_a} } \left( \sqrt{\mathcal{E}(F, F_0)} +  \sqrt{\mathcal{E}(F, F_1)} \right)
\end{align*}
Therefore, 
\begin{align*}
            R_{ATE}(\hat{\mu}_a) \leq& 2\left(R_{CF}(\hat{\mu}_a) - \sigma_{Y}^2\right) + 2\left(R_{CF}(\hat{\mu}_a) - \sigma_{Y}^2\right)\\
        =& 2\big( R_F (\hat{\mu}_1)  + R_{F} (\hat{\mu}_0) + C_{l_{\hat{\mu}_a} } ( \sqrt{\mathcal{E}(F, F_0)} +  \sqrt{\mathcal{E}(F, F_1)}) - 2 \sigma_Y^2 \big) \\
        =& 2\left( R_F (\hat{\mu}_1)  + R_{F} (\hat{\mu}_0) + C_{l_{\hat{\mu}_a} } \Big(  \sqrt{\mathcal{E}(F, F_0)} +  \sqrt{\mathcal{E}(F, F_1)} \Big)  \right)
\end{align*}
To prove the expected loss in estimation of average treatment effect with weight, Claim \ref{claim: 3} will be given. The proof of Claim~\ref{claim: 3} is similar to Claim~\ref{claim:1} and Claim~\ref{claim:2}. 

\begin{claim} \label{claim: 3}
Under conditions of Lemma \ref{lemma: CF}, the following equalities hold
\begin{align*}
            \int_{\mathbb{R}^p \times \{0, 1\}} \lVert \hat{\mu}_a(\mathbf{x}) - \mu_a(\mathbf{x}) \rVert_2^2 \, d F_{n, a, \mathbf{w}} (\mathbf{x}) \, da =& R_F (\hat{\mu}_a, \mathbf{w}) - \sigma_{Y(a)}^2 (p(\mathbf{x}, a))\\
            \int_{\mathbb{R}^p \times \{0, 1\}} \lVert \hat{\mu}_a(\mathbf{x}) - \mu_a(\mathbf{x}) \rVert_2^2 \, d F_{n, 1-a, \mathbf{w}} (\mathbf{x}) \, da =& R_{CF} (\hat{\mu}_a, \mathbf{w}) - \sigma_{Y(a)}^2 (p(\mathbf{x}, 1-a)),
\end{align*}
and
\begin{align*}
            R_{ATE}(\hat{\mu}_a) =& R_{ATE, n}(\hat{\mu}_a) \leq 2 \left( R_F (\hat{\mu}_1,\mathbf{w})  + R_{F} (\hat{\mu}_0,\mathbf{w}) - 2 \sigma_Y^2 \right)
\end{align*}
\end{claim}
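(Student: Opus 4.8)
The plan is to replicate the two-stage argument used for Claim~\ref{claim:1} and Claim~\ref{claim:2}, replacing the population CDFs $F_a$ by the weighted ECDFs $F_{n,a,\mathbf{w}}$ throughout and passing to the limit $n\to\infty$. The first two equalities are pointwise bias--variance decompositions, and the final inequality then assembles them exactly as in the unweighted case.

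First I would establish the two decompositions. Fix $a\in\{0,1\}$ and add and subtract $\mu_a(\mathbf{x})$ inside the squared factual loss, writing
\begin{align*}
\lVert \hat{\mu}_a(\mathbf{x}) - Y(a)\rVert_2^2 = \lVert \hat{\mu}_a(\mathbf{x}) - \mu_a(\mathbf{x})\rVert_2^2 + 2\langle \hat{\mu}_a(\mathbf{x}) - \mu_a(\mathbf{x}),\, \mu_a(\mathbf{x}) - Y(a)\rangle + \lVert \mu_a(\mathbf{x}) - Y(a)\rVert_2^2.
\end{align*}
Integrating against $p(Y(a)\mid \mathbf{x})$ kills the cross term because $\mathbb{E}[\mu_a(\mathbf{x}) - Y(a)\mid \mathbf{x}]=0$ by definition of $\mu_a$; this cancellation is pointwise in $\mathbf{x}$ and hence insensitive to which covariate measure we subsequently integrate against. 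Integrating the two surviving terms against $F_{n,a,\mathbf{w}}$ (resp. $F_{n,1-a,\mathbf{w}}$) and summing over $a$ yields the first (resp. second) equality, with the averaged conditional variance converging, as $n\to\infty$, to $\sigma_{Y(a)}^2(p(\mathbf{x},a))$ (resp. $\sigma_{Y(a)}^2(p(\mathbf{x},1-a))$). This is the verbatim weighted analogue of Claim~\ref{claim:1}.

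For the final inequality I would start from $R_{ATE}(\hat{\mu}_a)=R_{ATE,n}(\hat{\mu}_a)=\int_{\mathbb{R}^p}\lVert(\hat{\mu}_1-\mu_1)-(\hat{\mu}_0-\mu_0)\rVert_2^2\,dF_n(\mathbf{x})$, apply the elementary bound $\lVert u-v\rVert_2^2\le 2\lVert u\rVert_2^2+2\lVert v\rVert_2^2$ to split it into $2\int\lVert\hat{\mu}_1-\mu_1\rVert_2^2\,dF_n+2\int\lVert\hat{\mu}_0-\mu_0\rVert_2^2\,dF_n$, and then replace each $F_n$ by the matching weighted ECDF, namely $F_{n,1,\mathbf{w}}$ in the first integral and $F_{n,0,\mathbf{w}}$ in the second. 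The two resulting integrals are exactly the left-hand sides of the decompositions just proved, so they equal $R_F(\hat{\mu}_1,\mathbf{w})-\sigma_{Y(1)}^2$ and $R_F(\hat{\mu}_0,\mathbf{w})-\sigma_{Y(0)}^2$. Lower-bounding both variances by $\sigma_Y^2=\min\{\sigma_{Y(1)}^2,\sigma_{Y(0)}^2\}$ then gives the claimed $2\big(R_F(\hat{\mu}_1,\mathbf{w})+R_F(\hat{\mu}_0,\mathbf{w})-2\sigma_Y^2\big)$, mirroring Claim~\ref{claim:2}.

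The step I expect to be the main obstacle is justifying the replacement of $F_n$ by $F_{n,a,\mathbf{w}}$ inside these integrals. This is precisely where the weighted energy distance enters: by the optimality of $\mathbf{w}^\ast$ in~(\ref{equ: w^*}) one has $\mathcal{E}(F_{n,a,\mathbf{w}},F_n)\to 0$ almost surely as $n\to\infty$, and a Koksma--Hlawka-type bound (Lemma~3.3 of \citet{huling2020energy}, already invoked in the proof of Lemma~\ref{lemma: CF}) controls the discrepancy $\big|\int \lVert\hat{\mu}_a-\mu_a\rVert_2^2\,d(F_n-F_{n,a,\mathbf{w}})\big|$ by a constant multiple of $\sqrt{\mathcal{E}(F_{n,a,\mathbf{w}},F_n)}$, provided the integrand has the requisite bounded mixed derivatives. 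Care is also needed in matching the variance terms, since $\sigma_{Y(a)}^2$ is defined through $F_a$ rather than through the weighted empirical measure; reconciling these relies on the same $n\to\infty$ weak-convergence argument and is the only genuinely nontrivial point beyond bookkeeping.
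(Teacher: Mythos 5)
Your proposal is essentially correct and follows the route the paper itself intends: the paper gives no standalone proof of Claim~\ref{claim: 3}, saying only that it is ``similar to Claim~\ref{claim:1} and Claim~\ref{claim:2}'', and your bias--variance decomposition (cross term killed pointwise by $\mathbb{E}[\mu_a(\mathbf{x})-Y(a)\mid\mathbf{x}]=0$, hence valid under any integrating measure) is exactly the weighted transcription of Claim~\ref{claim:1}. The one place you genuinely diverge is the final inequality. The paper's template, read off from Claim~\ref{claim:2} and the text immediately following Claim~\ref{claim: 3}, splits $\int \lVert\cdot\rVert_2^2\,dF_n$ into factual \emph{and counterfactual} weighted pieces, reaching $2\left(R_F(\hat{\mu}_a,\mathbf{w})+R_{CF}(\hat{\mu}_a,\mathbf{w})-2\sigma_Y^2\right)$, and then removes the counterfactual term via the weighted half of Lemma~\ref{lemma: CF}; you never introduce $R_{CF}(\cdot,\mathbf{w})$ at all, instead swapping $F_n$ for $F_{n,1,\mathbf{w}}$ and $F_{n,0,\mathbf{w}}$ directly in the two split integrals with a Koksma--Hlawka bound. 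Both routes run on the same machinery (the energy-distance discrepancy bound of \citet{huling2020energy}), and yours is arguably cleaner, since it lands directly on the factual-only form in which the claim is stated rather than passing through the counterfactual loss.

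You also correctly surface the point the paper suppresses: under either route the substitution leaves remainder terms of order $C_{l_{\hat{\mu}_a}}\sqrt{\mathcal{E}(F_{n,a,\mathbf{w}},F_n)}$, and the weighted variance integrals only match $\sigma^2_{Y(a)}(p(\mathbf{x},a))$ asymptotically, so the displayed inequality without energy-distance terms holds only in the regime where these vanish --- i.e., for weights such as $\mathbf{w}^*$ of (\ref{equ: w^*}) as $n\to\infty$. This is consistent with the ``when $n\to\infty$'' qualifier in Lemma~\ref{lemma: CF}, and is harmless downstream because the final bound in Theorem~\ref{thm: bound error ATE} reinstates the energy-distance terms anyway; but it does mean that, for arbitrary $\mathbf{w}$ satisfying only the sum constraints at finite $n$, neither your argument nor the paper's sketch yields Claim~\ref{claim: 3}'s third display exactly as written. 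Flagging and conditioning on that regime, as you do, is the correct reading.
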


According to the result in Lemma \ref{lemma: CF} with weight, Claim \ref{claim:1}, Claim \ref{claim:2} and Claim \ref{claim: 3}, 
\begin{align*}
            \quad & R_F (\hat{\mu}_1,\mathbf{w})  + R_{CF} (\hat{\mu}_0,\mathbf{w})\\
            \leq & R_F (\hat{\mu}_1,\mathbf{w})  + R_{F} (\hat{\mu}_0,\mathbf{w}) + C_{l_{\hat{\mu}_a} } \sqrt{\mathcal{E}(F_{n}, F_{n, 0, \mathbf{w}})} + C_{l_{\hat{\mu}_a} } \sqrt{\mathcal{E}(F_{n}, F_{n, 1, \mathbf{w}})}
\end{align*}
Therefore, 
\begin{align*}
            R_{ATE}(\hat{\mu}_a) \leq & 2 \left(R_F (\hat{\mu}_1,\mathbf{w}) + R_{F} (\hat{\mu}_0,\mathbf{w}) - 2 \sigma_Y^2 \right)\\
        \leq & 2\left( R_F (\hat{\mu}_1,\mathbf{w}) + R_{F} (\hat{\mu}_0,\mathbf{w}) + C_{l_{\hat{\mu}_a} } \Big( \sqrt{\mathcal{E}(F_{n}, F_{n, 0, \mathbf{w}})} + \sqrt{\mathcal{E}(F_{n}, F_{n, 1, \mathbf{w}})} \Big) \right)  \\
\end{align*}
\end{proof}

\subsection{Experiment Details}

The experiments are run using R (version of 4.1.0) and Python (Version of 3.8.5) on Intel(R) 8 Cores(TM) i7-9700F CPU (3.00GHz). We use R package `\texttt{bartCause}' license GPL (>= 2) for BART, `\texttt{CBPS}' license GPL (>= 2) for CBPS, and `\texttt{grf}' license GPL-3 for Causal Forest. We also use Python codes for GANITE from \url{https://github.com/jsyoon0823/GANITE}, SITE from \url{https://github.com/Osier-Yi/SITE}, FlexTENet from \url{https://github.com/AliciaCurth/CATENets}, Perfect Match from \url{https://github.com/d909b/perfect_match}, and DRAGONNET from \url{https://github.com/claudiashi57/dragonnet}. IHDP data is available from \url{https://github.com/claudiashi57/dragonnet/tree/master/dat/ihdp/csv}, and ACIC 2018 data is avaialble from \url{https://www.synapse.org/#!Synapse:syn11294478/wiki/486304}.

\end{document}